\pgfplotsset{compat=1.15}
\setlist{nolistsep}
\theoremstyle{definition}
\newtheorem{definition}{Definition}[section]
\newcommand{\aaspace}{\mathcal{A}}
\newcommand{\sspace}{\mathcal{S}}
\newcommand{\RR}{\mathbb{R}}
\newcommand{\Exp}{\mathbb{E}}
\newcommand*\dif{\mathop{}\!\mathrm{d}}
\newcommand\mathbi[1]{\textbf{\em #1}}
\DeclareMathOperator*{\argmax}{arg\,max}
\newcommand{\defeq}{\vcentcolon=}
\newcommand{\mmd}{\mathrm{MMD}}
\DeclarePairedDelimiterX{\inp}[2]{\langle}{\rangle}{#1, #2}
\newcommand{\retgap}{\Delta_{\mathrm{Ret}}}
\newcommand{\ie}{\textit{i}.\textit{e}.}
\newcommand{\eg}{\textit{e}.\textit{g}.}
\begin{document}

\runningtitle{Towards Return Parity in Markov Decision Processes}

\runningauthor{J. Chi,~~J. Shen,~~D. Dai,~~W. Zhang,~~Y. Tian,~and~~H. Zhao}

\twocolumn[

\aistatstitle{Towards Return Parity in Markov Decision Processes}

\aistatsauthor{Jianfeng Chi \And Jian Shen \And Xinyi Dai} 

\aistatsaddress{University of Virginia \And Shanghai Jiao Tong University \And Shanghai Jiao Tong University}

\aistatsauthor{Weinan Zhang \And Yuan Tian \And Han Zhao}
\aistatsaddress{Shanghai Jiao Tong University \And University of Virginia \And University of Illinois at Urbana-Champaign}
]

\begin{abstract}
  Algorithmic decisions made by machine learning models in high-stakes domains may have lasting impacts over time. However, naive applications of standard fairness criterion in static settings over temporal domains may lead to delayed and adverse effects. To understand the dynamics of performance disparity, we study a fairness problem in Markov decision processes (MDPs). Specifically, we propose return parity, a fairness notion that requires MDPs from different demographic groups that share the same state and action spaces to achieve approximately the same expected time-discounted rewards.
  We first provide a decomposition theorem for return disparity, which decomposes the return disparity of any two MDPs sharing the same state and action spaces into the distance between group-wise reward functions, the discrepancy of group policies, and the discrepancy between state visitation distributions induced by the group policies.
  Motivated by our decomposition theorem, we propose algorithms to mitigate return disparity via learning a shared group policy with state visitation distributional alignment using integral probability metrics.
  We conduct experiments to corroborate our results, showing that the proposed algorithm can successfully close the disparity gap while maintaining the performance of policies on two real-world recommender system benchmark datasets.
\end{abstract}

\section{Introduction}

The increasing use of automated decision-making systems trained with real-world data has raised serious concerns with potential unfairness caused by biased data, learning algorithms, and models.
Decisions made by these systems have lasting and diverse effects on different social groups. 
For example, in predictive policing~\citep{lum2016predict}, each time, the decisions about the locations of future crimes are made by the predictive models. As a result, the discovered crime rates in different communities might change dynamically and interactively as feedback to the decisions being made by the models. Thus, the error rates of the predictive models for different communities might change over time, and error gaps among communities could possibly exacerbate in the long run. Similar feedback loops could also exist in the applications such as recommender systems (\eg, the user satisfactions from different demographic groups diverge over time), temporal resource allocation systems (\eg, the uneven resource allocation become more skew towards one group over the others), etc. 
This interplay between algorithmic decisions and the heterogeneous reactions caused by the decisions further complicates the analysis of (un)fairness problems in the dynamic environment. 

Most prior works mainly focus on studying static fairness notions (\eg, demographic parity~\citep{dwork2012fairness} and equalized odds~\citep{hardt2016equality}) in the settings of classification~\citep{hardt2016equality,zafar2017fairness,zhao2019inherent,jiang2020wasserstein} and regression~\citep{komiyama2018nonconvex, agarwal2019fair, chzhen2020fair, chi2021understanding}. 
In a seminal work, \citet{liu2018delayed} show that somewhat contrary to the common belief, enforcing static fairness constraints could do harm to the minority group even in a one-step feedback model. Motivated by this observation, a line of work aims to extend static fairness notions in the settings of online learning~\citep{blum2018preserving, bechavod2019equal} and multi-armed bandit~\citep{joseph2016fairness, patil2020achieving, chen2020fair}. However, these works do not take into account the interactions between the models and the environment: the decisions made by the models could potentially influence the state of our environment as well, as demonstrated by the predictive policing example. Other works study the interplay between (static) fairness notions and the population dynamics under simplified temporal dynamic models~\citep{hu2018short, ensign2018runaway, pmlr-v80-hashimoto18a, mouzannar2019fair, zhang2019group, elzayn2019fair, liu2020disparate}.
However, those simplified temporal dynamic models explicitly make task-specific assumptions on temporal dynamics and might not be able to precisely characterize the complex dynamics of a more general changing environment.

In this work, we study a fairness problem in Markov decision processes (MDPs) to understand the dynamics of performance disparity in the changing environments, taking into account the feedback loop caused by policies to the environments. Specifically, we propose \emph{return parity}, a novel fairness notion that requires MDPs from different demographic groups that share the same state and action spaces to achieve approximately the same expected time-discounted rewards. To the best of our knowledge, our work is the first of this kind, in the sense that we study the long-term impact of policy functions in general MDPs. First, we formally show exact return parity cannot always be satisfied for any two MDPs and provide sufficient conditions that ensure return parity in terms of transitions, initial distributions, and the reward functions. 
Next, we derive a decomposition theorem for return disparity that decomposes it into the distance between group-wise reward functions, the discrepancy of group policies, and the discrepancy between state visitation distributions induced by the group policies. Motivated by the decomposition theorem, we propose algorithms to mitigate return disparity via learning a shared group policy with state visitation distributional alignment using integral probability metrics.
We conduct experiments on two real-world benchmark datasets to corroborate the effectiveness of our proposed algorithms in reducing return disparity. Experimental results demonstrate that our proposed algorithms help to mitigate return disparity while maintaining the performance of policies. 

\section{Preliminaries}

\paragraph{Notation and Problem Setup} 
Throughout the paper, we mainly focus on discrete MDPs, where both the state and action spaces are finite.\footnote{The main results in Section~\ref{sec:main-results} could be extended to continuous state space and action space.} A Markov decision process is a tuple $(\sspace,\aaspace, \mu, T, r,\gamma)$, where $\sspace$ and $\aaspace$ are the state space and the action space, respectively;
$\mu$ is initial state distribution; $T\in \RR^{|\sspace| \times |\aaspace| \times |\sspace|} $ is the state transition function where $T(s' \mid s, a)=\Pr[S_{t+1}=s'\mid S_t=s, A_t=a]$; %
$r\in \RR^{ |\sspace| \times |\aaspace| }$ is the reward function where $r(s, a)$ represents the reward obtained when taking action $a$ in state $s$. Throughout the paper, we assume that the reward function is uniformly bounded, \ie, $\exists~ R > 0,~\|r\|_\infty\leq R$. We use $\gamma\in(0, 1)$ to denote the discount factor.
Given a policy
$\pi\in\RR^{|\sspace| \times |\aaspace|}$ (\ie, $\pi(a \mid s) = \Pr[A_t=a\mid S_t=s]$),
the induced state transition under $\pi$ is $P^{\pi}\in \RR^{|\sspace| \times |\sspace|} $ where $P^{\pi}(s'\mid s)= \sum_{a\in\aaspace}\pi(a \mid s) T(s' \mid s, a)$. The induced distribution over states under the policy $\pi$ at time step $t$ is 
\begin{equation}
\nonumber
\mu^{(\pi,t)} = \begin{cases}
\mu &\text{if $t=0$}\\
P^{\pi}\mu^{(\pi,t-1)} &\text{otherwise.}
\end{cases}
\end{equation}

The state visitation distribution (\ie, time-discounted distribution over states) is
$\mu^\pi = (1-\gamma) \sum_{t=0}^\infty \gamma^t \mu^{(\pi,t)} = (1-\gamma) \sum_{t=0}^\infty (\gamma P^{\pi})^t \mu$ and the occupancy measure (\ie, time-discounted distribution over state-action pairs) is $\rho^\pi(s, a)= \mu^\pi(s) \pi(a \mid s)$. We then define the value function w.r.t. the reward function $r$ under the policy $\pi$ as $v^\pi(s) = \Exp_\pi[\sum_{t=0}^\infty\gamma^{t}r_t\mid S_0=s]$ and the Q-function as $q^\pi(s, a) = \Exp_\pi[\sum_{t=0}^\infty\gamma^{t}r_t\mid S_0=s, A_0=a]$,
where $r_t$ is the immediate reward at time step $t$.
With all the notation defined above, the goal of reinforcement learning is to find a policy to maximize the value (expected return) under the initial state distribution:
\begin{equation}
    \nonumber
    \centering
    \begin{aligned}
    \eta^{\pi} \defeq 
    \Exp_{s\in\mu}[v^\pi(s)] &= \frac{1}{1-\gamma}
    \sum_{s\in\sspace}\sum_{a\in\aaspace} r(s, a) \rho^\pi(s, a)\\
    &= \frac{1}{1-\gamma} \sum_{s\in\sspace} \sum_{a\in\aaspace} r(s, a) \mu^\pi(s) \pi(a \mid s).
    \end{aligned}
\end{equation}
Let $|\sspace| = m$ and $|\aaspace| = n$. In practice, each state $s\in\sspace$ might represent features of an individual and the action enforced on the individual could lead to the change of features of the individual. 
We also assume there are two Markov decision processes that represent two different demographic groups (\eg, male/female, white/non-white) and the two MDPs share the same state space, action space, and discount factor but might differ in initial distributions, transitions, and reward functions. We use the subscript $g\in\{0, 1\}$ to denote the two groups. For example, $\mu_0$ and $\mu_1$ are the initial distributions of the two groups, respectively. Note that in our paper, we mainly discuss the setting there are two different demographic groups and follow the standard definition of return in the time-discounted MDPs, but the underlying theory and algorithms could easily be extended to the cases with finite $K > 2$ groups and undiscounted MDPs with finite time-horizon.

Next, we define \emph{$\epsilon$-return parity} as a fairness criterion to ask that different demographic groups share approximately the same long-term rewards under a policy:
\begin{definition}[$\epsilon$-Return Parity]
For $g\in\{0, 1\}$,  two MDPs satisfy $\epsilon$-\emph{return parity} if $\retgap \defeq | \Exp_{s\in\mu_0}[v^{\pi_0}(s)] - \Exp_{s\in\mu_1}[v^{\pi_1}(s)] | \leq \epsilon$.
\end{definition}

Return parity could have different implications depending on the scenarios we consider: In recommender systems, if reward function corresponds to be users' satisfaction, return parity seeks similar users' satisfaction across different demographic groups in the long run; In predictive policing, if we define the reward function as the ratio between truly discovered incidents of crime (\ie, those directly observed by dispatched police as a result of the predictive policing algorithm) and the overall predicted incidents of crime in a time period, return parity requires a similar ratio for different communities over time.
The complex temporal dynamic of the above scenarios could be modeled by MDPs.
The goal in our setting is then to find two policies that maximize the weighted combination of expected returns of two MDPs respectively while satisfying $\epsilon$-return parity:
\begin{equation}
    \nonumber
    \begin{aligned}
     \max_{\pi_0, \pi_1}\quad& \lambda\, \eta^{\pi_0} + (1-\lambda)\, \eta^{\pi_1}, \qquad\textrm{s.t.}\quad \retgap \leq \epsilon,\\
    \end{aligned}
\end{equation}
where $\lambda \in [0, 1]$ represents the proportion of group 0 over the entire population.

\paragraph{Integral Probability Metrics} The integral probability metrics (IPMs) are a class of distance measures on probability distributions over the same probability space~\citep{muller1997integral}. Formally, given two probability distributions $\mathcal{P}$ and $\mathcal{Q}$, the IPMs are defined as $d_{\mathcal{F}}(\mathcal{P}, \mathcal{Q}) = \sup_{f\in\mathcal{F}} |\int f \dif  \mathcal{P} - \int f \dif  \mathcal{Q}|$, where $\mathcal{F}$ is a class of real-valued bounded measurable
functions on the space where the distributions are defined on. Different choices of $\mathcal{F}$ recover different distance metrics: 
If we choose $\mathcal{F} = \{ f: \| f \|_L \leq 1 \}$ where $\|\cdot\|_L$  denotes the Lipschitz semi-norm, then $d_{\mathcal{F}}(\mathcal{P}, \mathcal{Q})$ becomes \emph{Wasserstein-1 distance} $W_1(\mathcal{P}, \mathcal{Q})$; If we choose $\mathcal{F} = \{ f: \| f \|_\mathcal{H} \leq 1 \}$ where $\|\cdot\|_\mathcal{H}$ denotes the norm in a reproducing kernel Hilbert space (RKHS), then $d_{\mathcal{F}}(\mathcal{P}, \mathcal{Q})$ becomes \emph{maximum mean discrepancy} $\mmd(\mathcal{P}, \mathcal{Q})$. 

\section{Analysis of Return Parity}
\label{sec:main-results}

In this section, we first show that return parity cannot always be satisfied between two MDPs that share the same state and action spaces and provide sufficient conditions under which return parity is possible. Then, we provide more insights into return disparity by proving an upper bound of the return gap between two MDPs.
We defer all the detailed proofs to Appendix~\ref{app-sec:proof} due to the space limit.

\subsection{Is Return Parity Always Possible?}

Before we provide a rigorous analysis of return disparity in MDPs, it is vital to ask whether the exact return parity is always achievable. The following proposition gives a negative answer to this question:
\begin{restatable}{proposition}{ImpossiblityResults}
For any constant $c > 0$, there exist two MDPs that share the same state and action spaces, such that $\forall~ \pi_0, \pi_1 \in \Pi$, the return disparity $\retgap \geq c$.
\label{prop:imp}
\end{restatable}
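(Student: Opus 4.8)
The plan is to exhibit a concrete pair of MDPs whose value functions are \emph{constant} — hence independent of the chosen policies — but offset by a large additive shift, so that the return gap is pinned at a value at least $c$ no matter which policies are used. The key design idea is to make the reward functions state--action-independent, which kills any dependence of the return on $\pi_0,\pi_1$.

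First I would fix, arbitrarily, a common finite state space $\sspace$, a common finite action space $\aaspace$, a common discount factor $\gamma\in(0,1)$, common transition kernels, and arbitrary initial distributions $\mu_0,\mu_1$; in other words the two MDPs may agree on everything except the reward. I then set the two reward functions to the constants $r_0(s,a)\equiv 0$ and $r_1(s,a)\equiv c\,(1-\gamma)$ on $\sspace\times\aaspace$. Both are uniformly bounded (take $R=c(1-\gamma)$ in the boundedness assumption of the preliminaries), so these are legitimate MDPs in the sense of Section~\ref{sec:main-results}.

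Second I would read off the returns. Since $r_0\equiv 0$, for every policy $\pi_0\in\Pi$ and every state $s$ we have $v^{\pi_0}(s)=\Exp_{\pi_0}\!\big[\sum_{t=0}^{\infty}\gamma^t r_0(S_t,A_t)\mid S_0=s\big]=0$, hence $\eta^{\pi_0}=\Exp_{s\sim\mu_0}[v^{\pi_0}(s)]=0$. Since $r_1\equiv c(1-\gamma)$, for every policy $\pi_1\in\Pi$ and every state $s$ we have $v^{\pi_1}(s)=\sum_{t=0}^{\infty}\gamma^t\,c(1-\gamma)=c$, hence $\eta^{\pi_1}=c$. Therefore $\retgap=|\eta^{\pi_0}-\eta^{\pi_1}|=c\ge c$ for \emph{every} pair $\pi_0,\pi_1\in\Pi$, which is exactly the statement.

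I do not anticipate a real technical obstacle; the only point that requires care is ensuring the disparity is immune to the choice of policy, which is precisely why the rewards are taken to be constants — this forces the value function, and thus the return, to coincide for all policies. (If one preferred the disparity to arise from the transition dynamics under a \emph{shared} reward, a two-state gadget — one MDP absorbed in a rewarding state, the other in a non-rewarding state, with the reward magnitude rescaled to $c(1-\gamma)$ — would also work for the given $\gamma$; but the constant-reward construction above is the most economical route to a universal lower bound~$c$.)
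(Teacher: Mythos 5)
Your proposal is correct: with $r_0\equiv 0$ and $r_1\equiv c(1-\gamma)$ the value functions are the constants $0$ and $c$ for every policy, so $\retgap=c$ identically, which proves the claim. The paper's proof uses a different gadget, the one you mention in your closing parenthetical: both groups share the \emph{same} reward function and the \emph{same} transition kernel (two absorbing states, with $r(s_1,\cdot)=c(1-\gamma)$ and $r(s_2,\cdot)=0$), and the disparity comes entirely from the initial distributions $\mu_0=[1,0]^{T}$ versus $\mu_1=[0,1]^{T}$. Both constructions exploit the same core idea — pin the returns at policy-independent values differing by $c$ — and both stay within the degrees of freedom the setup allows (the two MDPs may differ in initial distributions, transitions, and rewards). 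Yours is slightly more economical to verify; the paper's is slightly more informative as an impossibility result, since it shows the gap can be unavoidable even when the two groups have \emph{identical} rewards and dynamics and differ only in where they start, which is the more troubling scenario for the fairness motivation and also connects to the role the reward-difference term $\|r_0-r_1\|_\infty$ plays as an irreducible constant in Theorem~\ref{thm:main}. No gap in your argument.
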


For example, consider two MDPs share two states $s_1$ and $s_2$. Let $r(s_1, a)= c (1-\gamma) > 0$ and $r(s_2, a) = 0,~\forall a\in\aaspace$, $T(s_2\mid s_1, a)=T(s_1\mid s_2, a) = 0$ and $T(s_1\mid s_1, a)=T(s_2\mid s_2, a) = 1,~\forall a\in\aaspace$. Given $\mu_0 = [1, 0]^T$ and $\mu_1 = [0, 1]^T$, then the return gap $\retgap = c > 0$. In this case, it is impossible to find policies to satisfy $\epsilon$-return parity for any $\epsilon < c$. 

In addition, it is also natural to ask whether it is feasible to maximize the expected returns of the two MDPs simultaneously while achieving $\epsilon$-return parity in general. Formally, with the help of the linear programming (LP) approach for MDPs~\citep{de2002linear, de2003linear} and the duality of LP, it is equivalent to solve the following dual LP:
\begin{equation}
    \small
    \nonumber
    \begin{aligned}
    \max& \sum_s \sum_a \hat{\rho}_0 (s, a) r_0(s, a) + \hat{\rho}_1 (s, a) r_1(s, a) - \epsilon (b_0 + b_1) \\
    \text{s.t.}& \sum_a \hat{\rho}_0 (s_i, a) -\gamma \sum_{s} \sum_{a} T_0(s_i \mid s, a) \hat{\rho}_0 (s, a) \\[-0.5ex]
    & ~~~~~~~~~~~~~~~~~~~~~~~~~~= (\lambda + b_0 - b_1) (\mu_0)_i~~~~~~~\forall~i\in[m]  \\
    & \sum_a \hat{\rho}_1 (s_i, a) -\gamma \sum_{s} \sum_{a} T_1(s_i \mid s, a) \hat{\rho}_1 (s, a) \\[-0.5ex]
    & ~~~~~~~~~~~~~~~~~~~~~~~~~~= (1-\lambda + b_1 - b_0)(\mu_1)_i~~\forall~i\in[m], \\
    \end{aligned}
\end{equation}
where $\hat{\rho}_0 (s, a), \hat{\rho}_1 (s, a), b_0, b_1\geq 0,~\forall~s, a$ are dual variables. 
Note that $\hat{\rho}_0 (s, a)$ and $\hat{\rho}_1 (s, a)$ have the interpretation of discounted state-action counts of the policy when $b_0 = b_1$, and $b_0, b_1$ are the corresponding dual variables of the $\epsilon$-return parity constraint, representing the ``per unit cost’’ of the overall return to achieve $\epsilon$-return parity. The dual constraints are state transitions under the learned policies.
We can now characterize a sufficient condition for the optimal policies $\pi_0$ and $\pi_1$ to satisfy $\epsilon$-return parity with the dual formulation above.
\begin{restatable}{proposition}{PossiblityResults}
For $\forall~\hat{\rho}_0 (s, a), \hat{\rho}_1 (s, a), b_0, b_1\geq 0$, if there exists $i \in [m]$, such that
\begin{equation}
    \nonumber
    \small
    \begin{aligned}
        \sum_a \hat{\rho}_0(s_i, a) - \gamma \sum_{s} \sum_{a} T_0(s_i \mid s, a) \hat{\rho}_0 (s, a) &> (b_0 - b_1) (\mu_0)_i \\
        \sum_a \hat{\rho}_1 (s_i, a) -\gamma \sum_{s} \sum_{a} T_1(s_i \mid s, a) \hat{\rho}_1 (s, a) &> (b_1 - b_0) (\mu_1)_i  \\
        \sum_s \sum_a \hat{\rho}_0 (s, a) r_0(s, a) + \hat{\rho}_1 (s, a) r_1(s, a) &\leq (b_0+b_1)\epsilon \\
    \end{aligned}
\end{equation}
then the optimal policies $\pi_0^*$ and $\pi_1^*$ that maximize the expected returns of two MDPs satisfy $\epsilon$-return parity.
\label{prop:suff_v2}
\end{restatable}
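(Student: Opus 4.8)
The plan is to derive the proposition from strong linear-programming duality and complementary slackness applied to the dual LP displayed just before the statement. Via the LP characterization of MDPs together with Lagrangian duality applied to the return-parity constraint, the problem $\max_{\pi_0,\pi_1}\lambda\eta^{\pi_0}+(1-\lambda)\eta^{\pi_1}$ subject to $\retgap\le\epsilon$ is equivalent to that LP, where $\hat\rho_0,\hat\rho_1$ are (unnormalized) occupancy measures and $b_0,b_1\ge 0$ are the multipliers attached to the two halves $\eta^{\pi_0}-\eta^{\pi_1}\le\epsilon$ and $\eta^{\pi_1}-\eta^{\pi_0}\le\epsilon$ of the $\epsilon$-return-parity constraint. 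Assuming the constrained problem is feasible, the LP is feasible and bounded, so an optimal solution $(\hat\rho_0^\star,\hat\rho_1^\star,b_0^\star,b_1^\star)$ obeying complementary slackness exists. I would also record one structural fact up front: on the slice $b_0=b_1=0$ the two flow constraints decouple into the ordinary Bellman-flow equations of the two MDPs, scaled by $\lambda$ and $1-\lambda$; hence this slice is maximized by the (suitably scaled) occupancy measures of the individual return-maximizing policies $\pi_0^{\mathrm{opt}},\pi_1^{\mathrm{opt}}$ appearing in the statement, with objective value $\lambda\eta^{\pi_0^{\mathrm{opt}}}+(1-\lambda)\eta^{\pi_1^{\mathrm{opt}}}$.

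The core of the proof is to show that the displayed hypothesis forces $b_0^\star=b_1^\star=0$ at the optimum. I would argue by contraposition: assuming some $b_g^\star>0$, I would exhibit the optimal point $(\hat\rho_0^\star,\hat\rho_1^\star,b_0^\star,b_1^\star)$ itself as a configuration violating the hypothesis. Since it is dual-feasible its flow equalities read $\sum_a\hat\rho_0^\star(s_i,a)-\gamma\sum_{s,a}T_0(s_i\mid s,a)\hat\rho_0^\star(s,a)=(\lambda+b_0^\star-b_1^\star)(\mu_0)_i$ and analogously for group $1$, which --- using $0<\lambda<1$ and that $\mu_0,\mu_1$ put mass on the relevant states --- imply the first two displayed inequalities there; on the other hand, by strong duality $\sum_{s,a}\hat\rho_0^\star r_0+\hat\rho_1^\star r_1-\epsilon(b_0^\star+b_1^\star)$ equals the optimal value of the LP, which dominates the value $\lambda\eta^{\pi_0^{\mathrm{opt}}}+(1-\lambda)\eta^{\pi_1^{\mathrm{opt}}}$ attained on the $b=0$ slice (strictly positive in the nondegenerate case), so the third displayed inequality fails. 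Thus no state $i$ satisfies all three inequalities at once, contradicting the hypothesis, so $b_0^\star=b_1^\star=0$.

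Once $b_0^\star=b_1^\star=0$, the optimal value of the LP equals $\lambda\eta^{\pi_0^{\mathrm{opt}}}+(1-\lambda)\eta^{\pi_1^{\mathrm{opt}}}$, and by strong duality this is also the optimum of the constrained problem; since the constrained optimum cannot exceed the unconstrained one, the two coincide. Because the unconstrained objective decouples across the two groups, every pair attaining it consists of individually return-optimal policies and therefore has return gap exactly $|\eta^{\pi_0^{\mathrm{opt}}}-\eta^{\pi_1^{\mathrm{opt}}}|$; if this gap were larger than $\epsilon$, no parity-feasible pair could attain the unconstrained value and the constrained optimum would be strictly smaller, a contradiction. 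Hence $\retgap=|\eta^{\pi_0^{\mathrm{opt}}}-\eta^{\pi_1^{\mathrm{opt}}}|\le\epsilon$, i.e.\ the return-maximizing policies satisfy $\epsilon$-return parity.

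I expect the main obstacle to be the middle step: carefully tracking signs and the requirement that the three displayed inequalities hold at a \emph{common} state $i$, so as to confirm that the stated condition is precisely the complementary-slackness certificate that a return-parity multiplier vanishes at the optimum. A secondary, more routine point is verifying that any nonnegative $\hat\rho_g$ satisfying a scaled Bellman-flow equation is a scaled occupancy measure of a genuine policy, which is what makes ``$b^\star=0$'' literally equivalent to ``the parity constraint is inactive.''
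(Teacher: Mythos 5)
Your route is genuinely different from the paper's, which never invokes strong duality or complementary slackness. The paper writes the constrained problem as a primal LP over the value functions $V^{\pi_0},V^{\pi_1}$ (Bellman inequalities plus the two linearizations of $|\mu_0^T V^{\pi_0}-\mu_1^T V^{\pi_1}|\le\epsilon$), puts the constraints in the form $\mathbf{Ax}\le\mathbf{b}$, $\mathbf{x}\ge 0$, and applies the Farkas alternative: the system is feasible iff there is \emph{no} $\mathbf{y}=(\hat{\rho}_0,\hat{\rho}_1,b_0,b_1)\ge 0$ with $\mathbf{A}^{\mathsf{T}}\mathbf{y}\ge 0$ and $\mathbf{b}^{\mathsf{T}}\mathbf{y}<0$. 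The displayed condition is then read off as the contrapositive of ``no such infeasibility certificate exists,'' which is precisely why it is quantified over \emph{all} nonnegative $(\hat{\rho}_0,\hat{\rho}_1,b_0,b_1)$. You instead use the hypothesis at a single point --- the dual optimum --- which is a much weaker (and structurally different) use of it.

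The genuine gap is in your core step, the claim that the hypothesis forces $b_0^\star=b_1^\star=0$. Your argument that the third inequality fails at $(\hat{\rho}_0^\star,\hat{\rho}_1^\star,b_0^\star,b_1^\star)$ uses only that the LP optimal value $\sum_{s,a}\hat{\rho}_0^\star r_0+\hat{\rho}_1^\star r_1-\epsilon(b_0^\star+b_1^\star)$ is strictly positive; it nowhere uses $b_g^\star>0$. This creates a dichotomy that defeats the intended contraposition: if the LP value is positive, the dual-optimal point violates the hypothesis \emph{regardless} of $b^\star$ (the third inequality is $i$-independent, so its failure alone kills every $i$), and you have shown the hypothesis is unsatisfiable rather than that it forces $b^\star=0$; if the LP value is $\le 0$, your contradiction never materializes and $b^\star=0$ is not established, so the chain ``parity constraint inactive $\Rightarrow$ unconstrained optima are parity-feasible'' is left without support. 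Two further loose ends: the first two inequalities require a \emph{common} state $i$ with $(\mu_0)_i>0$ and $(\mu_1)_i>0$, which need not exist (cf.\ the disjoint-support example behind Proposition~\ref{prop:imp}), and the ``nondegenerate case'' of strictly positive optimal return is an assumption absent from the statement. Your closing step --- that coincidence of the constrained and unconstrained optima, plus decoupling of the objective, yields $\retgap\le\epsilon$ for the individually optimal policies --- is sound, but it is exactly the step that needs $b^\star=0$, which your argument does not secure.
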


In light of the dual LP formulation, the first two inequalities in Proposition~\ref{prop:suff_v2} indicate the probability masses of the initial distributions in at least one state are greater than zero, and the last inequality requires the maximum value of the objective function in the dual LP formulation is no greater than zero.

\subsection{A Decomposition Theorem for Return Disparity}

The linear programming methods to solve the return disparity problem of MDPs become impractical in continuous or high-dimensional discrete state and action spaces. 
However, in many real-world scenarios where return parity is desired, the number of the states or actions is often large (\eg, recommend items to different demographic groups of users).
In this section, we shall provide an upper bound to (1) quantitatively characterize return disparity in terms of the distance between the reward functions, the discrepancy of group policies, and the discrepancy between state visitation distributions and, (2) motivate our algorithm design to mitigate return disparity in continuous or high-dimensional discrete state and action spaces (Sec.~\ref{sec:alg}). 

\begin{restatable}{theorem}{rewardDisparityUpperBoundState}
For $g\in \{0, 1\}$, given policies $\pi_0, \pi_1\in \Pi$ and assume there exists a witness function class $\mathcal{F} = \{f: \sspace \times \aaspace \to \RR \}$, such that the reward functions $r_g(s) = \Exp_{a\sim \pi_g(\cdot \mid s)}[ r_g(a, s) \mid s]\in\mathcal{F}$ for $\forall~s\in\sspace$, $a\in\aaspace$, and $g\in\{0, 1\}$, then the following holds: 
\begin{equation}
    \centering
    \nonumber
    \begin{aligned}
     \retgap \leq \frac{1}{1-\gamma}&\bigg(\|r_0 - r_1\|_\infty + R\cdot\min \big\{ \Exp_{s\sim\mu^{\pi_0}}\big[ d_{\pi_0, \pi_1}(s) \big],\\
     & \Exp_{s\sim\mu^{\pi_1}}\big[ d_{\pi_0, \pi_1}(s) \big] \big\} + d_{\mathcal{F}} (\mu^{\pi_0}(s), \mu^{\pi_1}(s)) \bigg),\\
    \end{aligned}
\end{equation}
where $d_{\pi_0, \pi_1}(s) \defeq \|\pi_0(\cdot\mid s) - \pi_1(\cdot\mid s)\|_1$. 
\label{thm:main}
\end{restatable}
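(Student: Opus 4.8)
The plan is to reduce $\retgap$ to a difference of two linear functionals evaluated against the two state visitation distributions, and then peel off the three error terms by two successive applications of the triangle inequality with carefully chosen intermediate quantities.

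First I would rewrite each return via the occupancy-measure identity already recorded in the preliminaries: $\Exp_{s\sim\mu_g}[v^{\pi_g}(s)] = \eta^{\pi_g} = \frac{1}{1-\gamma}\sum_{s}\mu^{\pi_g}(s)\,\bar r_g(s)$, where $\bar r_g(s) \defeq \Exp_{a\sim\pi_g(\cdot\mid s)}[r_g(s,a)]$ is precisely the function the theorem assumes lies in the witness class $\mathcal{F}$ (viewing it, with the paper's mild abuse of notation, as a function on $\sspace\times\aaspace$ that is constant in $a$, equivalently a function on $\sspace$). Hence
\begin{equation}
\nonumber
\retgap = \frac{1}{1-\gamma}\Big| \sum_s \mu^{\pi_0}(s)\,\bar r_0(s) - \sum_s \mu^{\pi_1}(s)\,\bar r_1(s) \Big|,
\end{equation}
which already exposes the global $1/(1-\gamma)$ factor, so it remains to bound the absolute value by the bracketed expression.

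Next I would insert the intermediate term $\sum_s \mu^{\pi_1}(s)\,\bar r_0(s)$ and split by the triangle inequality into (i) $\big|\sum_s(\mu^{\pi_0}(s)-\mu^{\pi_1}(s))\,\bar r_0(s)\big|$ and (ii) $\sum_s \mu^{\pi_1}(s)\,|\bar r_0(s)-\bar r_1(s)|$. Term (i) is at most $d_{\mathcal{F}}(\mu^{\pi_0},\mu^{\pi_1})$ because $\bar r_0\in\mathcal{F}$ — this is exactly the role of the witness-class hypothesis. For term (ii) I would telescope a second time inside the state reward, writing $\bar r_0(s)-\bar r_1(s) = \sum_a(\pi_0(a\mid s)-\pi_1(a\mid s))\,r_0(s,a) + \sum_a \pi_1(a\mid s)\,(r_0(s,a)-r_1(s,a))$; Hölder's inequality bounds the first sum by $\|r_0(s,\cdot)\|_\infty\,\|\pi_0(\cdot\mid s)-\pi_1(\cdot\mid s)\|_1 \le R\,d_{\pi_0,\pi_1}(s)$, and the second is at most $\|r_0-r_1\|_\infty$ since $\pi_1(\cdot\mid s)$ is a probability vector. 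Averaging over $\mu^{\pi_1}$ and recombining with term (i) gives the stated bound with $\Exp_{s\sim\mu^{\pi_1}}[d_{\pi_0,\pi_1}(s)]$.

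Finally, rerunning the same argument with the other intermediate term $\sum_s \mu^{\pi_0}(s)\,\bar r_1(s)$ (which uses $\bar r_1\in\mathcal{F}$, also assumed) yields an identical bound but with $\Exp_{s\sim\mu^{\pi_0}}[d_{\pi_0,\pi_1}(s)]$ replacing $\Exp_{s\sim\mu^{\pi_1}}[d_{\pi_0,\pi_1}(s)]$; since the $\|r_0-r_1\|_\infty$ and $d_{\mathcal{F}}$ terms are the same in both, taking the smaller of the two bounds produces the $\min$ in the statement. I do not expect a genuine obstacle: the proof is two triangle inequalities plus Hölder. The only things requiring care are (a) choosing the intermediate term so that a member of $\mathcal{F}$ is paired with the visitation-distribution difference, and (b) arranging the inner split so the policy mismatch lands on $d_{\pi_0,\pi_1}$ and the reward mismatch on $\|r_0-r_1\|_\infty$ with the correct $R$ factor — the $\min$ then being just the better of the two symmetric routes.
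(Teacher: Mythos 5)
Your proposal is correct and is essentially the paper's own argument: both telescope the quantity $\sum_{s,a} r_g(s,a)\,\mu^{\pi_g}(s)\,\pi_g(a\mid s)$ through intermediate terms so that the reward gap is controlled by $\|r_0-r_1\|_\infty$, the policy gap by H\"older against $d_{\pi_0,\pi_1}(s)$, and the visitation gap by the IPM with witness $\bar r_g\in\mathcal{F}$, then take the minimum over the two symmetric routes. The only difference is the order in which the three factors are swapped (you peel off the visitation term first, the paper peels off the reward term first), which changes nothing of substance.
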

\paragraph{Remark} We see that return disparity is upper bounded by three terms: the distance between group-wise reward functions, the discrepancy of group policies, and the discrepancy between state visitation distributions of the two MDPs. Given any two MDPs, the distance between group-wise reward functions is constant. It suggests that if the reward functions from two groups are drastically different, it may not be possible to ensure return parity by only looking at the policies and the state-visitation distributions. If we further assume the two MDPs share the same reward functions (\ie, $r_0 = r_1$) and the same policy (\ie, $\pi_0 = \pi_1$), then the upper bound in Theorem~\ref{thm:main} is simplified as
\begin{equation}
    \nonumber
    \retgap \leq d_{\mathcal{F}}\big( \mu^{\pi_0}(s),  \mu^{\pi_1}(s) \big).
\end{equation}
In this case, it implies a sufficient condition to minimize return disparity is to find policies $\pi \in \Pi$ that minimize the distance between induced state visitation distributions in the two MDPs. In the subsequent sections, we assume the reward functions of different groups are (approximately) the same and propose algorithmic interventions to mitigate return disparity.
For completeness, we also provide another decomposition theorem (Theorem~\ref{thm:occ}) in Appendix~\ref{app-sec:thm} and motivate the design of another family of algorithms based on Theorem~\ref{thm:occ} in Appendix~\ref{app-sec:alg}. We present the Theorem~\ref{thm:main} in the main text since the algorithms (see Sec.~\ref{sec:alg}) motivated by it are more efficient and stable in the application scenarios (\eg, recommender system) we consider.

\section{Mitigation of Return Disparity}
\label{sec:alg}
Inspired by the theoretical results in Theorem~\ref{thm:main}, we introduce a state visitation distribution alignment procedure, which can be naturally incorporated into existing RL methods, to encourage policies to maximize expected returns while keeping state visitation distribution similar to each other. We use deep Q-networks~\citep{mnih2015human} as our baseline backbone algorithm, which has demonstrated its superior performance in recommender application~\citep{zheng2018drn}, in which we will conduct experiments next. In what follows, we first briefly introduce how to learn the deep Q-networks and then discuss state visitation distributional alignment via IPMs. We give the main pipeline of our algorithms in Algorithm~\ref{alg}.

\subsection{Preliminaries: Learning Deep Q-networks}

The main idea behind learning deep Q-networks (Q-learning) is to approximate the value functions in high dimensional state and action spaces. Specifically, we aim at learning deep Q-network $Q(s, a; \theta): \sspace \times \aaspace \to \RR$ to approximate the reward when taking an action in a given state. Given the deep Q-network, we can construct the policy that maximizes the rewards: $\pi(s) = \argmax_a Q(s, a)$. When the action space is discrete such as recommendation applications, the Q-network is often implemented as $Q(s, a; \theta): \sspace \to \RR^{|\aaspace|}$ for efficiency, where the value in output dimension $i$ represents estimated reward when taking action $a_i$ given the state $s$.

During model training, the parameter $\theta$ of the Q-network is trained through a trial-and-error process. Take interactive recommendation process as an example: at each time step $t$, the recommender agent obtains a user’s state $s_t$, and takes an action $a_t$ (\eg, recommend an item) via the $\varepsilon$-greedy policy (\ie, with probability $1-\varepsilon$ taking an action with the max Q-value, and with probability $\varepsilon$ choosing a random action). The agent then receives the reward $r_t$ (\eg, rating score on the recommended item) and the updated state $s_{t+1}$ from the user's feedback and stores the experience $(s_t, a_t, r_t, s_{t+1})$ in replay buffer $\mathbi{D}$. After updating the replay buffer with batches of experiences from different users, the agent then optimizes the following loss function to improve the Q-network:
\begin{equation}
    \mathcal{L}_{Q}(\theta) = \Exp_{(s_t, a_t, r_t, s_{t+1})\sim\mathbi{D}}[(y_t - Q(s, a; \theta))^2],
    \label{eq:loss-q}
\end{equation}
with 
\begin{equation}
    y_t = r_t + \gamma \max_{a_{t+1}} Q(s_{t+1}, a_{t+1}; \theta).
    \label{eq:orginal-y}
\end{equation}

To avoid the overestimation problem~\citep{thrun1993issues} in original DQN, we adopt the double DQN architecture~\citep{van2016deep}: a target network $Q'$ parameterized by $\theta'$ is utilized along with the online network $Q$. The online network is updated at each model update step, and the target network is a duplicate of the online network and updated with delay (soft update):
\begin{equation}
    \theta' = \tau \theta + (1-\tau) \theta',
    \label{eq:target-q-update}
\end{equation}
where $\tau$ controls the update frequency.
With the double DQN architecture, $y_t$ in Eq.~\ref{eq:orginal-y} is changed to
\begin{equation}
    y_t = r_t + \gamma \max_{a_{t+1}} Q'(s_{t+1}, \argmax_{a_{t+1}}Q(s_{t+1}, a_{t+1}; \theta) ; \theta').
    \label{eq:y}
\end{equation}

\begin{algorithm}[t]
 			\caption{Algorithm to mitigate return disparity under the framework of DQN.}
			\label{alg}
			\begin{algorithmic}[1]
				\STATE Initialize Q-functions $Q_{\theta}$ and $Q'_{\theta'}$, feature extractors $f_{\psi_0}$ and $f_{\psi_1}$, environment buffers $\mathbi{D}_0$ and $\mathbi{D}_1$
				\FOR{each iteration}
			        \FOR{each environment step}
			            \FOR{$g\in\{0, 1\}$}
			            \STATE Sample an action $a_g$ using $\varepsilon$-greedy policy; Add the experience $(s_g, a_g, s^\prime_g, r_g)$ to $\mathbi{D}_g$;
			            \ENDFOR
			        \ENDFOR
				    \FOR{each model update step}
				        \STATE Sample a mini-batch of experiences from $\mathbi{D}_0$ and $\mathbi{D}_1$; Update online and target DQNs and feature extractors $f_{\psi_g},~g\in\{0, 1\}$ using Eq.~\ref{eq:loss-q} and Eq.~\ref{eq:target-q-update};
			        \ENDFOR
			        \FOR{each state visitation distributional alignment step}
			        \STATE Update feature extractors $f_{\psi_0}$ and $f_{\psi_1}$ using Eq.~\ref{eq:minimax} or Eq.~\ref{eq:mmd};
			        \ENDFOR
				\ENDFOR
			\end{algorithmic}
\end{algorithm}

\subsection{State Visitation distributional Alignment}

Inspired by Theorem~\ref{thm:main}, we propose to align the state visitation distribution via learning group-invariant representation. 
Specifically, we introduce two feature extractors $f_{\psi_g},~g\in\{0, 1\}$ before inputting $s_t$ to Q-function. We alternatively optimize $f_{\psi_g}$ between minimizing the loss of Q-function (Eq.~\ref{eq:loss-q}) and minimizing the loss of state visitation distributional alignment. At each iteration, the feature extractors and the Q-function, parameterized by $\theta' = \{\psi_g \circ \theta\}$, are first trained jointly to minimize $\mathcal{L}_q(\theta')$. After updating the models, the feature extractors are then updated to minimize the loss of state visitation distributional alignment via minimizing the estimated integral probability metrics between different groups. The whole algorithm is shown in Algorithm~\ref{alg}. Next, we introduce the detailed steps of state visitation distributional alignment via Wasserstein-1 distance, which is a kind of IPM choosing $\mathcal{F} = \{ f: \| f \|_L \leq 1 \}$ and is more favorable than total variation distance from the optimization perspective~\citep{arjovsky2017wasserstein}.

\paragraph{Wasserstein-1 Distance}

Given samples $s_0$ from the state visitation distribution of group 0 and samples $s_1$ from the state visitation distribution of group 1, two feature extractors $f_0$ and $f_1$, with parameters $\psi_0$ and $\psi_1$, map the samples to feature representations: $h_0 = f_0(s_0)$ and $h_1 = f_1(s_1)$. 
Upon receiving the feature representations from both groups, we use a critic function $f_c$, parameterized by $\omega$, to estimate Wasserstein-1 distance~\citep{arjovsky2017wasserstein} by maximizing the following objective function:
\begin{equation}
    \mathcal{L}_{\text{Wass}}(\psi_0, \psi_1, \omega) = \frac{1}{N_0} \sum_{i=1}^{N_0} f_c(h_0^i) - \frac{1}{N_1} \sum_{j=1}^{N_1} f_c(h_1^j).
\end{equation}

To enforce the Lipschitz constraint on the critic function $f_c$, we choose to minimize the gradient penalty loss~\citep{gulrajani2017Improved} meanwhile:
\begin{equation}
    \mathcal{L}_{\text{GP}}(\omega) = \Exp_{\hat{h}\sim \hat{\mathbi{D}}}[( \Vert \nabla f_c(\hat{h}) \Vert_2 -1)^2],
\end{equation}

where $\hat{\mathbi{D}}$ represents the distribution of a uniformly distributed linear interpolations between the group visitation distributions. Finally, the overall minimax objective becomes
\begin{equation}
    \min_{\psi_0, \psi_1} \max_{\omega} \mathcal{L}_{\text{Wass}}(\psi_0, \psi_1, \omega) - \beta  \mathcal{L}_{\text{GP}}(\omega), 
    \label{eq:minimax}
\end{equation}
where $\beta$ is the balancing coefficient and by convention it is set to be 10~\citep{gulrajani2017Improved}. 
To ensure the training stability and do less harm to the group that obtains lower return, we use block coordinate descent algorithm~\citep{wright2015coordinate} to update the feature extractors: at each iteration of state visitation distributional alignment, we first identify the group with higher return choose to update its feature extractor while fixing the parameters of the feature extractor in the other group. Figure~\ref{fig:algo-framework} illustrates our algorithm framework. 

\begin{figure}[!t]
    \centering
    \includegraphics[width=\linewidth]{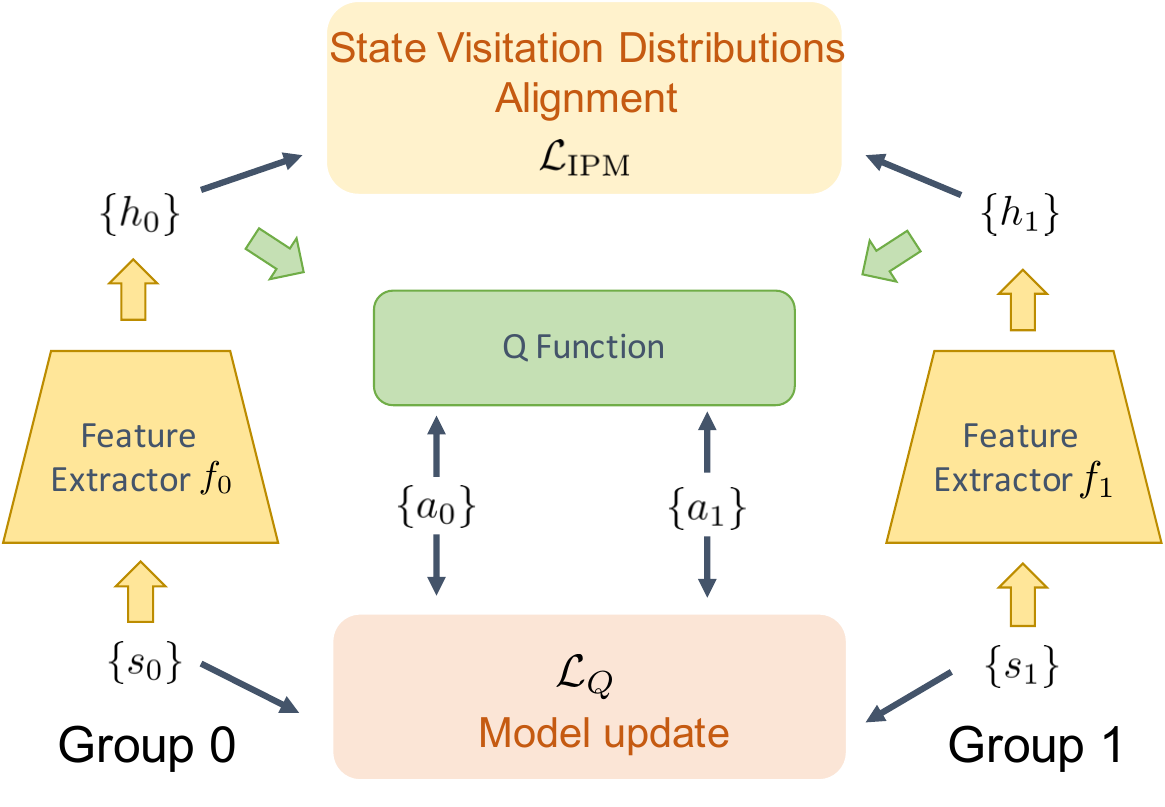}
    \vspace{-1em}
    \caption{Illustration of our algorithms.
    At every iteration, we first update the models (\eg, Q-function and feature extractors). The feature extractors are then updated to minimize the loss of state visitation distributional alignment via minimizing the estimated integral probability metrics between different groups.
    }
    \label{fig:algo-framework}
    \vspace{-1em}
\end{figure}

\paragraph{MMD Variant}

Similarly to Wasserstein distance, we can also use maximum mean discrepancy to align state visitation distributions. Let $k$ be the kernel of the corresponding RKHS $\mathcal{H}$ on the feature space, then the squared MMD between the induced feature distributions of two groups $\mathbi{D}_{h_0}$ and $\mathbi{D}_{h_1}$ is 
\begin{equation}
    \nonumber
    \begin{aligned}
    \text{MMD}^2(\mathbi{D}_{h_0}, \mathbi{D}_{h_1}) \defeq& \Exp_{h_0, h_0'}[k(h_0, h_0')] + \Exp_{h_1, h_1'}[k(h_1, h_1')]\\
    &- 2 \Exp_{h_0, h_1}[k(h_0, h_1)].
    \end{aligned}
\end{equation}

In practice, given samples from $\{h_0^1, \dots, h_0^{N_0}\} \sim \mathbi{D}_{h_0}$ and $\{h_1^1, \dots, h_1^{N_1}\} \sim \mathbi{D}_{h_1}$, then unbiased estimation of the squared MMD is
\begin{equation}
    \begin{aligned}
        \mathcal{L}_{\text{MMD}}(\psi_0, \psi_1) =& \frac{1}{N_0 (N_0-1)} \sum_{i\neq i'} k(h_0^i, h_0^{i'}) \\
        &+ \frac{1}{N_1 (N_1-1)} \sum_{j\neq j'} k(h_1^j, h_1^{j'}) \\
        &- \frac{2}{N_0 N_1} \sum_{i} \sum_{j} k(h_0^i, h_1^{j}) \\
    \end{aligned}
    \label{eq:mmd}
\end{equation}

To align state visitation distributions, we optimize the feature extractor using Eq.~\ref{eq:mmd}. In the implementation of the MMD variant, we use a linear combination of RBF kernels with bandwidths $\{0.001, 0.005, 0.01, 0.05, 0.1, 1, 5, 10\}$ since it remains an open problem on choosing the optimal kernels.

\paragraph{Extension for Multi-group Fairness}
We could extend our algorithm for multi-group fairness by learning one feature extractor for each group. The model update step remains the same. To reduce model complexity, we might choose to align the state visitation distributions between the two groups with the largest return disparity. We leave this extension for future study.

\section{Experiments}
In the following, we conduct empirical evaluation on two real-world datasets, showing that our proposed algorithms help to mitigate return disparity while maintaining the performance of policies.\footnote{Our code is publicly available at: \url{https://github.com/JFChi/Return-Parity-MDP}}

\subsection{Experimental Setup}

\paragraph{Datasets} The two real-world datasets we use are benchmark recommender system datasets with user demographic information (\eg, gender and age):
\begin{itemize}[leftmargin=*]
    \item MovieLens-1M\footnote{\url{https://grouplens.org/datasets/movielens/}} is a benchmark dataset consisting of 1 million ratings from more than 6000 users on more than 4000 movies on the MovieLens website. The movie ratings range from 1 to 5 and the users are provided with demographic information such as gender and age. 
    \item Book-Crossing\footnote{\url{http://www2.informatik.uni-freiburg.de/~cziegler/BX/}} is a book rating dataset collected from the Book-Crossing community. It consists of more than 1 million ratings from more than 278k users on about 271k books. The book ratings range from 0 to 10, and the users are provided with demographic information such as age.
\end{itemize}

Our goal of using these datasets is to learn a recommender system that maximizes the expected long-term user satisfaction while keeping the user satisfaction in different demographic groups similar. 

\paragraph{Environment Simulator}

We focus on evaluating our proposed RL algorithms on the two benchmark datasets by setting up an environment simulator~\citep{chen2019large, zhou2020interactive} to mimic online environments. Specifically, we normalize the user ratings in each dataset into the range $[-1, 1]$. Given a user's historical interaction with the recommender system at time step $t$ (state $s_t$), the recommender system recommends an item (action $a_t$), and the user gives a rating to the recommended item (reward $r_t$). Following~\citep{chen2019large}, we give the details of the state representation scheme used in this work in Figure~\ref{fig:state-representation}.

\begin{figure}[!ht]
    \centering
    \includegraphics[width=1.0\linewidth]{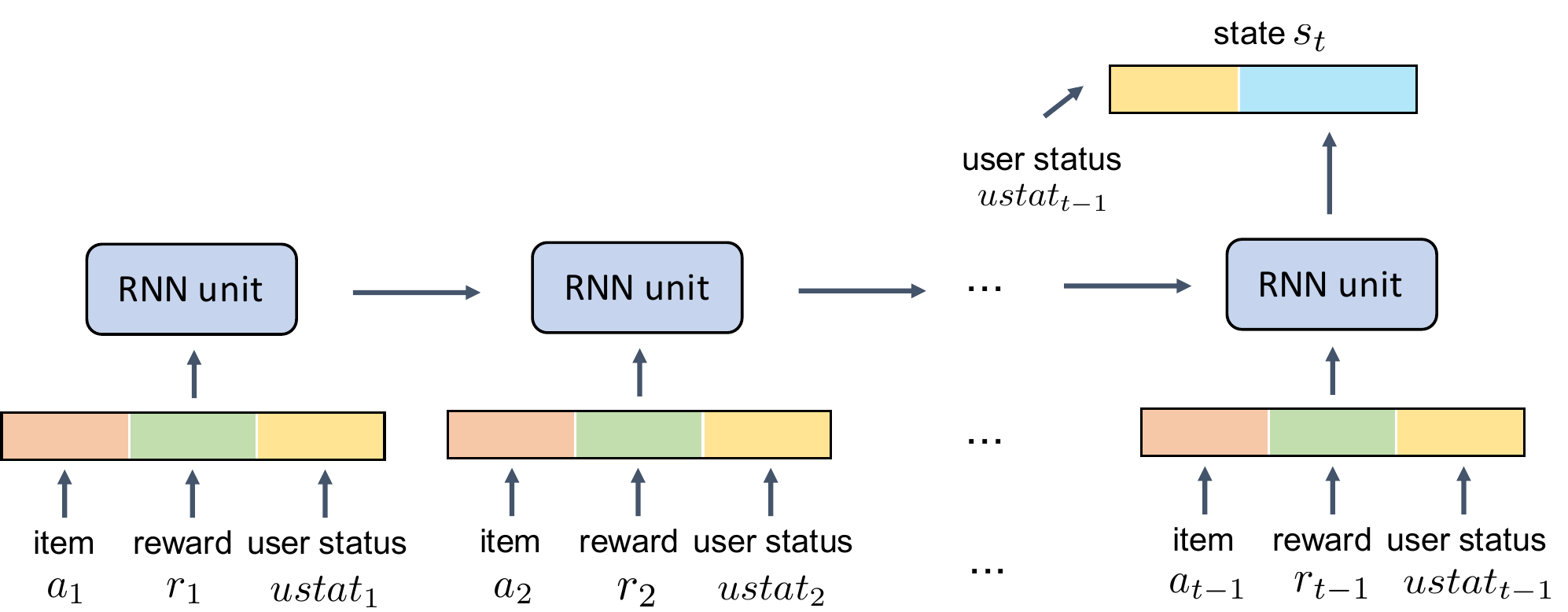}
    \caption{State representation in our RL environments.}
    \label{fig:state-representation}
\end{figure}

\begin{figure*}[!htb]
\centering
\begin{subfigure}[b]{.235\linewidth}
  \centering
  \includegraphics[width=1.1\linewidth]{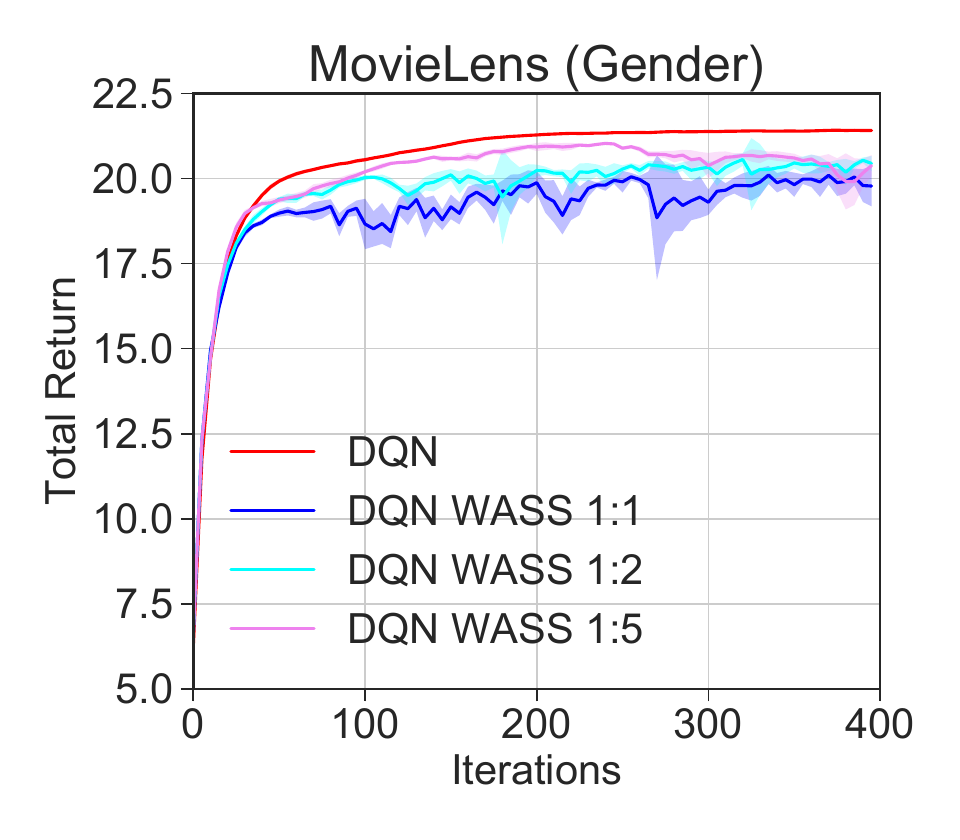}
\end{subfigure}
~
\begin{subfigure}[b]{.235\linewidth}
  \centering
\includegraphics[width=1.1\linewidth]{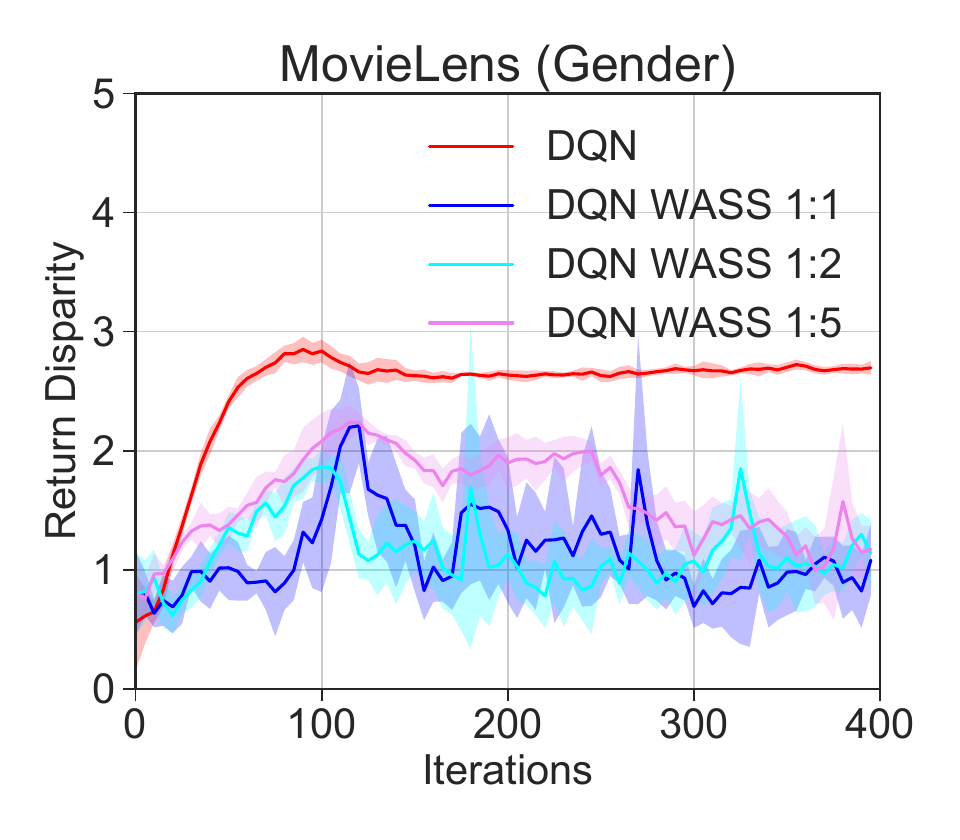}
\end{subfigure}
~
\begin{subfigure}[b]{.235\linewidth}
  \centering
  \includegraphics[width=1.1\linewidth]{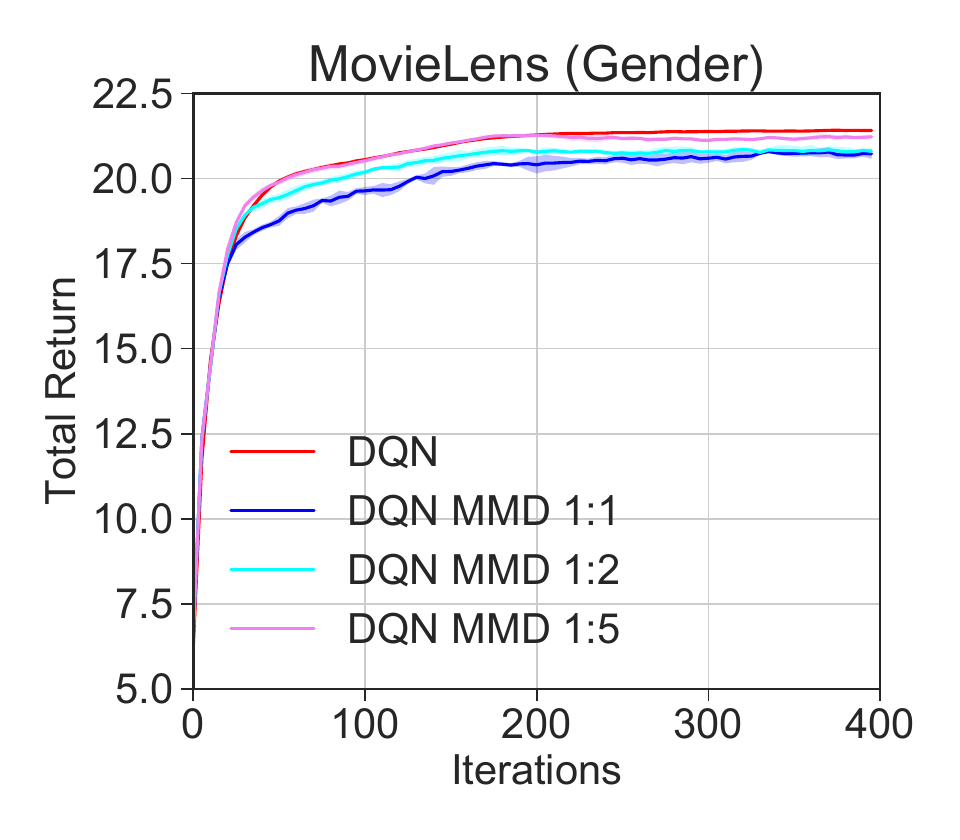}
\end{subfigure}
~
\begin{subfigure}[b]{.235\linewidth}
  \centering
  \includegraphics[width=1.1\linewidth]{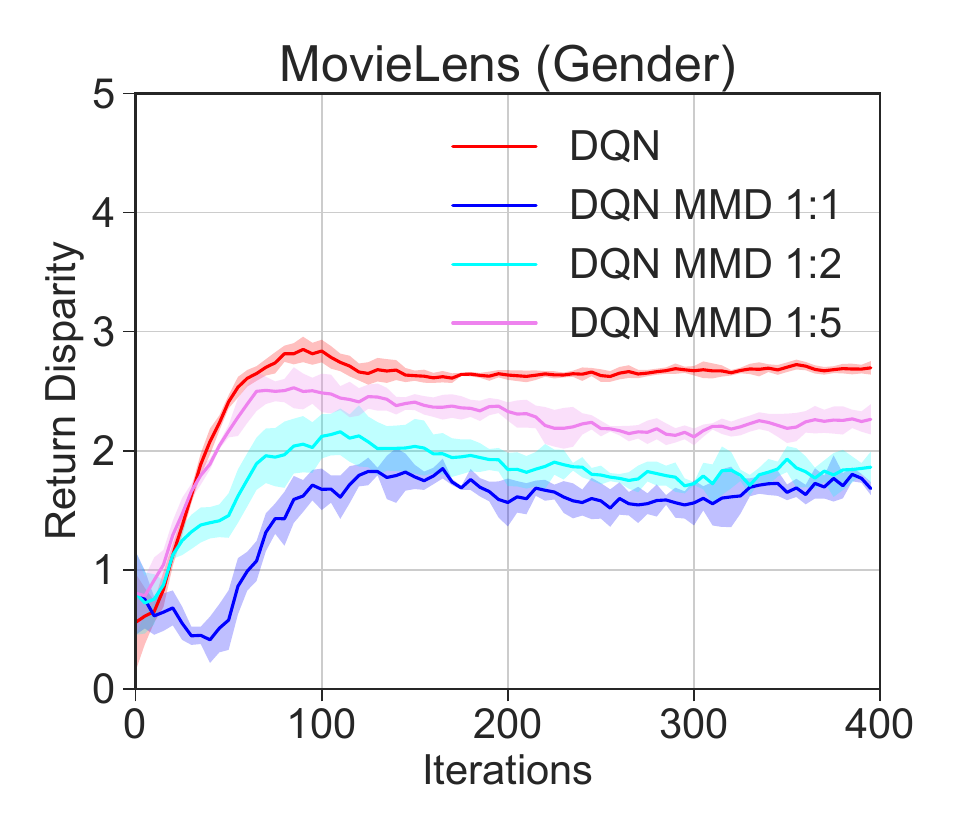}
\end{subfigure}
~
\begin{subfigure}[b]{.235\linewidth}
  \centering
  \includegraphics[width=1.1\linewidth]{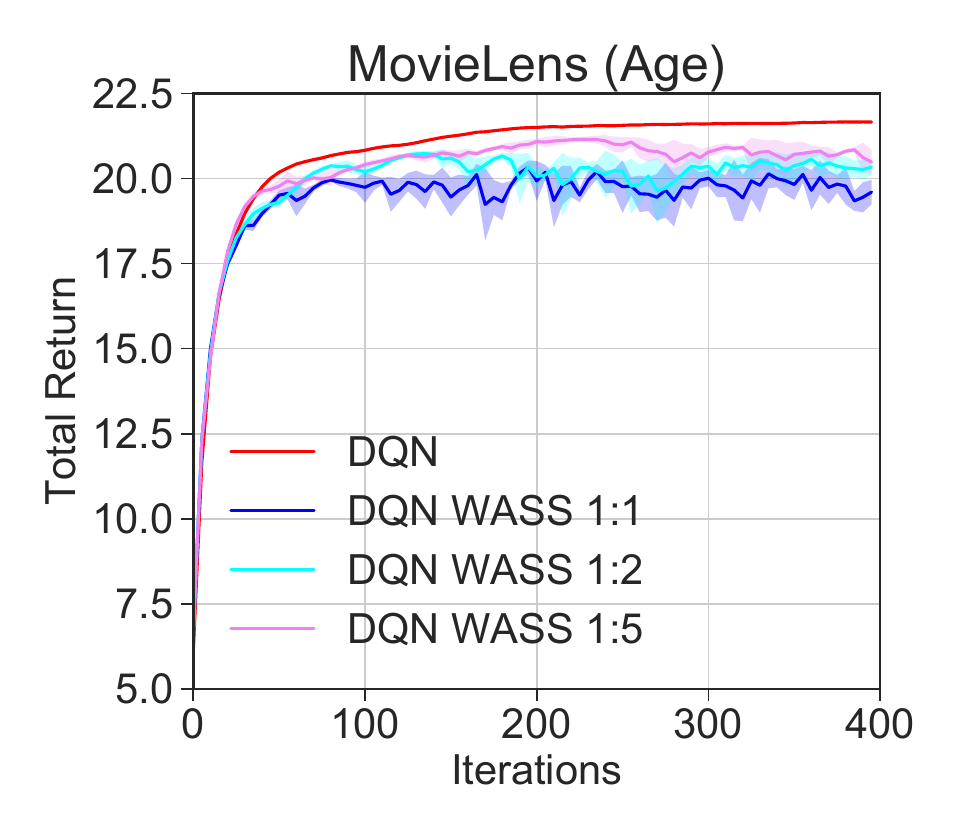}
\end{subfigure}
~
\begin{subfigure}[b]{.235\linewidth}
  \centering
\includegraphics[width=1.1\linewidth]{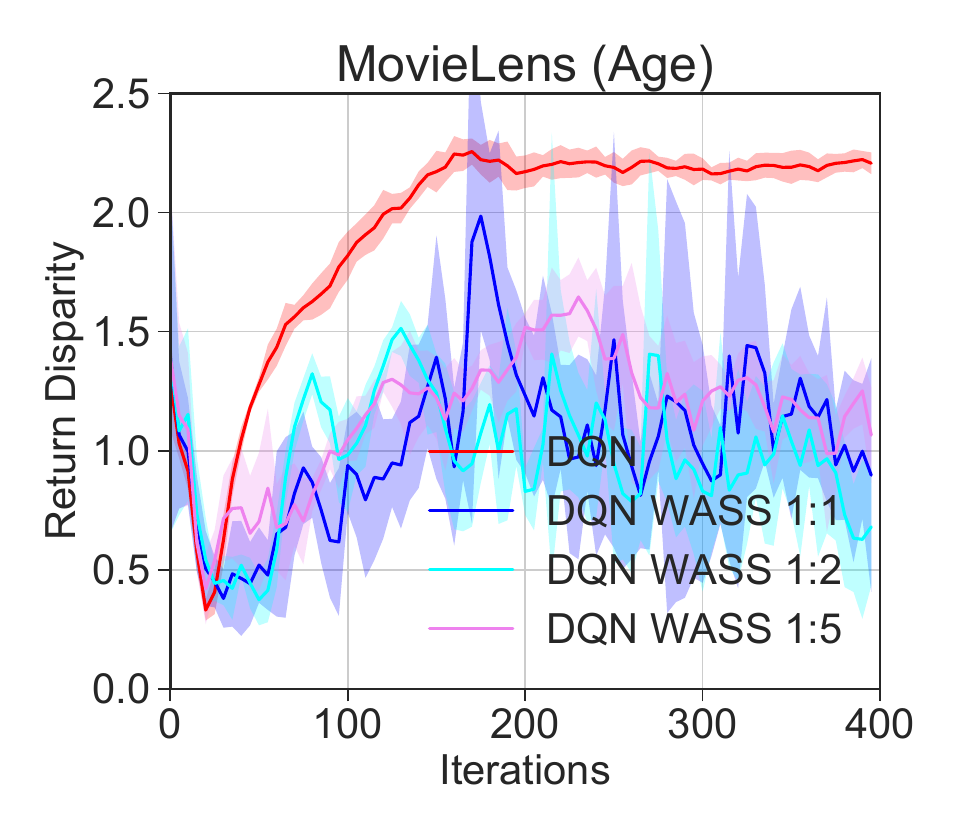}
\end{subfigure}
~
\begin{subfigure}[b]{.235\linewidth}
  \centering
  \includegraphics[width=1.1\linewidth]{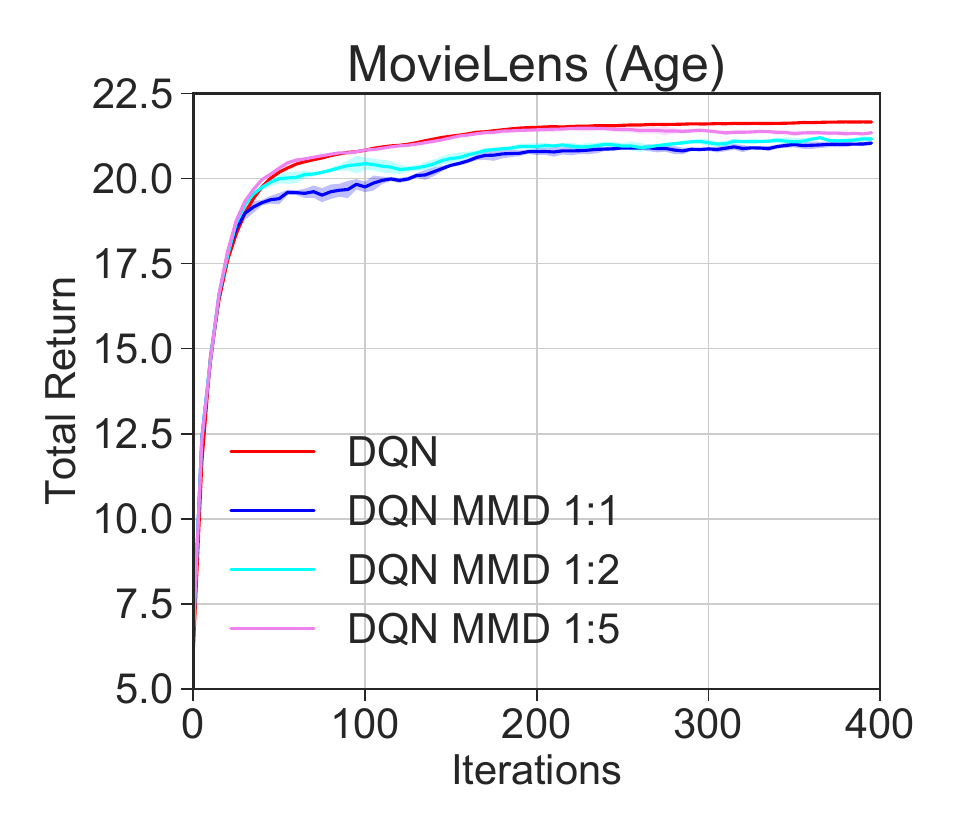}
\end{subfigure}
~
\begin{subfigure}[b]{.235\linewidth}
  \centering
  \includegraphics[width=1.1\linewidth]{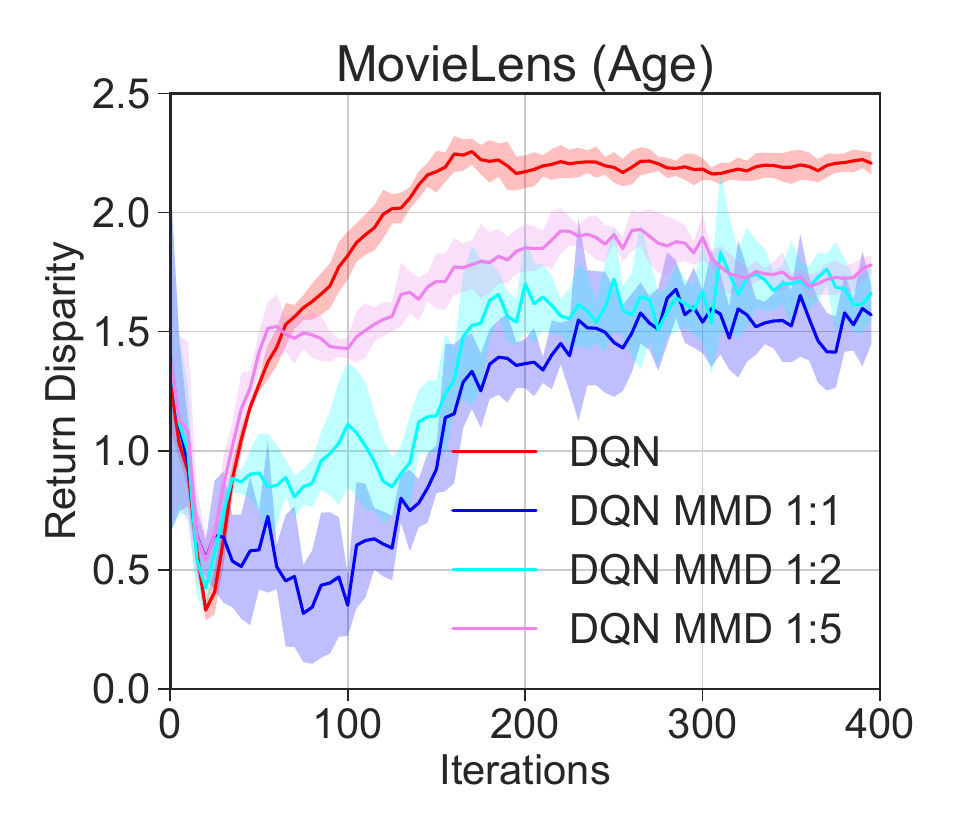}
\end{subfigure}
~
\begin{subfigure}[b]{.235\linewidth}
  \centering
  \includegraphics[width=1.1\linewidth]{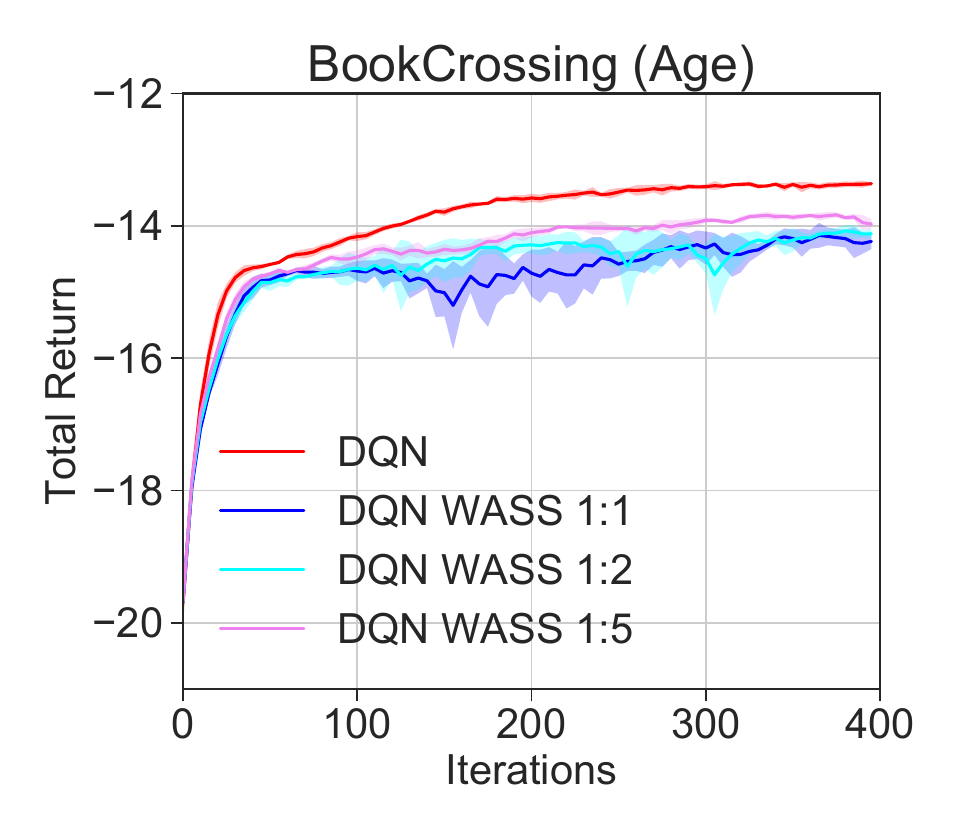}
\end{subfigure}
~
\begin{subfigure}[b]{.235\linewidth}
  \centering
\includegraphics[width=1.1\linewidth]{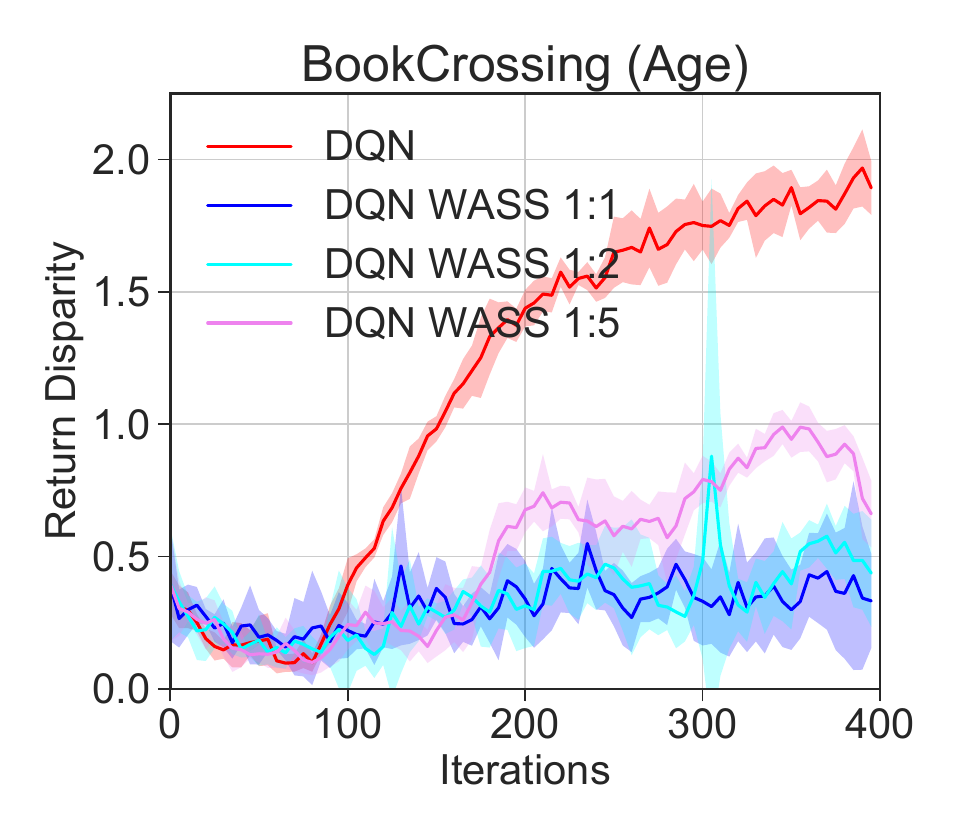}
\end{subfigure}
~
\begin{subfigure}[b]{.235\linewidth}
  \centering
  \includegraphics[width=1.1\linewidth]{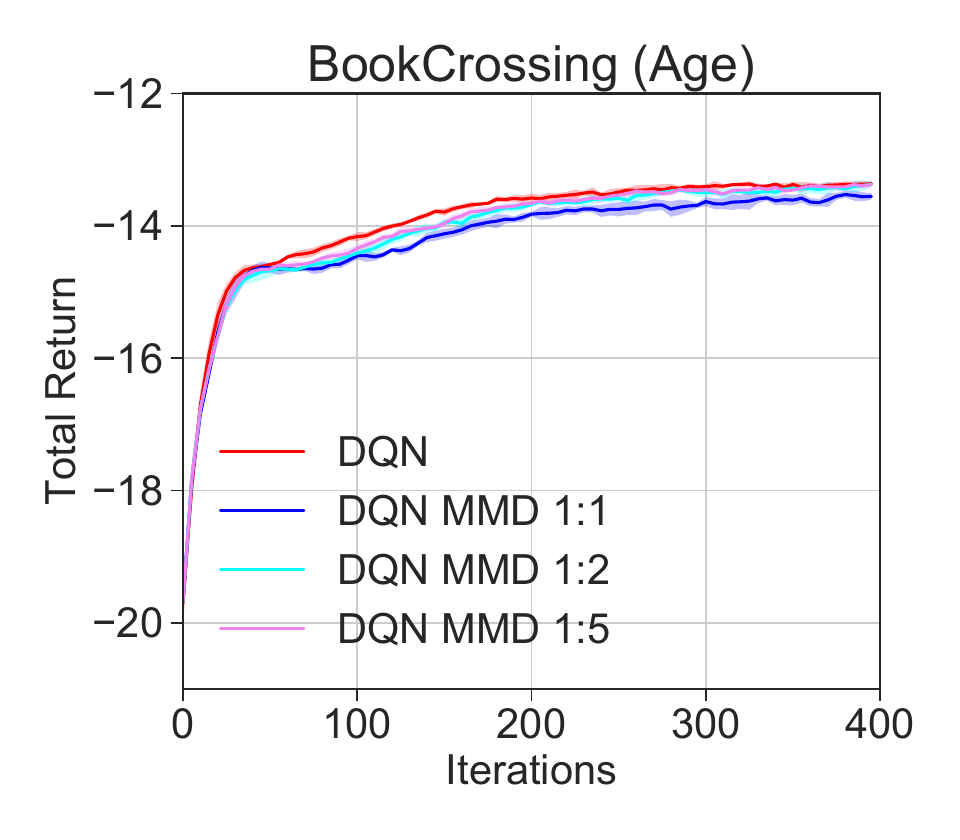}
\end{subfigure}
~
\begin{subfigure}[b]{.235\linewidth}
  \centering
  \includegraphics[width=1.1\linewidth]{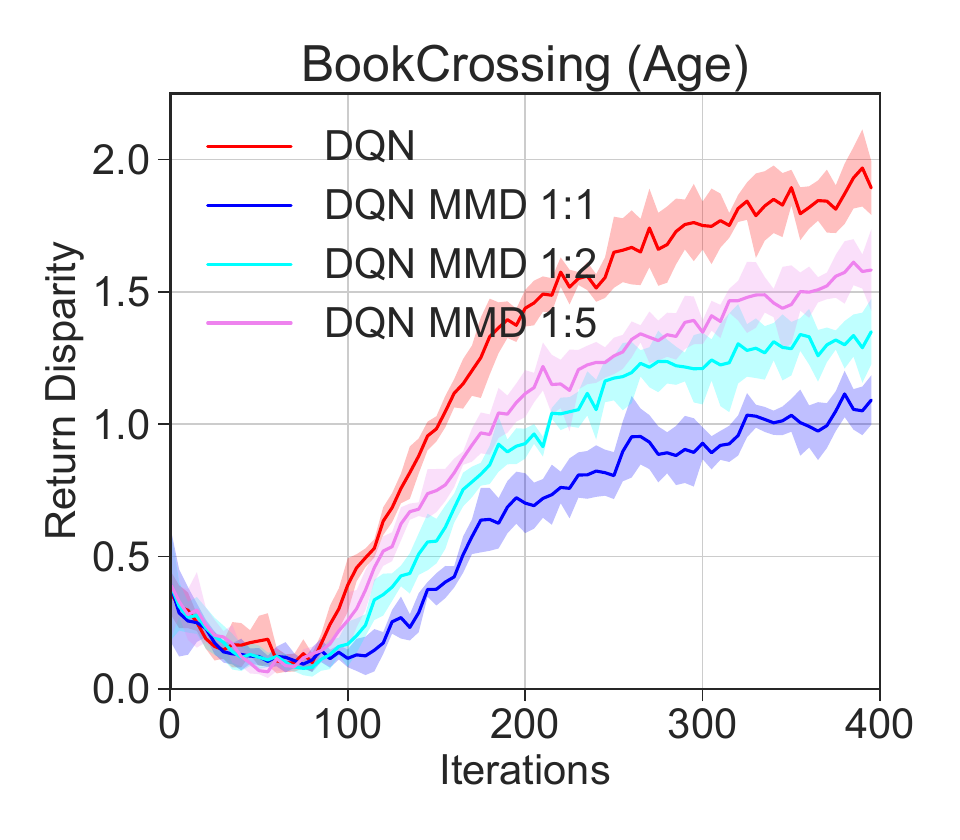}
\end{subfigure}
~
    \caption{Learning curves of DQN, \textsc{DQN-WASS} and \textsc{DQN-MMD} in three different settings.
    The legend \textsc{DQN WASS} (\textsc{DQN MMD}) X:Y indicates the interval of model update versus the interval of state visitation distributional alignment with Wasserstein-1 distance (MMD distance) is X:Y (\ie, smaller numbers means more frequent updates). With the increase of the frequency of state visitation distributional alignment, return disparities are decreasing at the cost of performances of policies in all environments.}
    \label{fig:main-results}
\end{figure*}

As shown in Figure~\ref{fig:state-representation}, the user state $s_t$ is constructed by concatenating the user status at $t-1$ and the $t-1$-th output of a recurrent neural network (RNN). The user status at $t-1$ contains statistics information such as the number of positive/negative rewards before time step $t$. The input of the RNN at each time step is composed of three signals: the recommended item, the reward gained by recommending the corresponding item, and the user status. Note that each item and reward are mapped to an embedding vector and a one-hot vector before inputting to the RNN. We perform matrix factorization~\citep{koren2009matrix} to train an item embedding for each item to recommend.

For each dataset, we randomly split the users into two parts: 80\% of the users are used for training, and the other 20\% are used for testing. Due to the way we perform the train/test split in our datasets, our experiments are cold-start scenarios: the users in the test set have never been seen during training, and there is no interaction between the recommender system and the users at first. To deal with the problem, our recommender system recommends a popular item among the training users to a test user at time step $t_0$ and recommends non-repeated items to
the user interactively according to users' feedback. The episode length is set to be 32 for each user in the two datasets in our experiments.

\paragraph{Methods and Implementation Details}
We implement the DQN algorithm and our proposed algorithms that perform state visitation distributional alignment via Wasserstein-1 distance (\textsc{DQN-WASS}) and maximum mean discrepancy (\textsc{DQN-MMD}). To the best of our knowledge, our work is the first work that studies the return disparity problem in MDPs. We also adapt the state-of-the-art reduction approach, constrained policy optimization (CPO)~\citep{achiam2017constrained} to our problem setting and find its training processes cannot converge, so we do not include the results here. We suspect the failure of CPO is because their setting is different from ours: they assume the constraint of the policy is a deterministic function determined by states and actions, while in our setting, the reward gap in each environment step is dynamically changing, making it hard to estimate.

We take gender and age in MovieLens dataset and age in Book-Crossing dataset as the binary demographic groups (\eg, male/female, young/old). In each environment, we vary the model update frequency and the state visitation distributional alignment update frequency in \textsc{DQN-Wass} and \textsc{DQN-MMD} and report the corresponding overall return and the return disparity between groups. We average the results over five different random seeds and visualize the performance curves in each setting.
The detailed data pre-processing pipelines and hyper-parameter configurations in our experiments are presented in Appendix~\ref{app-sec:exp-detail}. 

\subsection{Results and Analysis}

The performance curves of \textsc{DQN}, \textsc{DQN-WASS} and \textsc{DQN-MMD} are shown in Figure~\ref{fig:main-results}. 
We can see that with the increase of the frequency of state visitation distributional alignment, return disparities are decreasing at the cost of performances of policies in all environments, which demonstrates the trade-offs between return maximization and return parity.
Our methods can flexibly tune the trade-off between return maximization and return parity by controlling the relative update frequency ratio between the model update step and state visitation distributional alignment update step. 
Compared to \textsc{DQN-WASS}, \textsc{DQN-MMD} leads to more stable training processes with slightly higher overall returns and return disparities on average. 

Next, we provide more insights into how our algorithms work. Since the MMD distance can be estimated analytically using Eq.~\ref{eq:mmd}, we visualize the learning curves of estimated MMD distances between the (induced) state visitation distributions in different MMD update settings in MovieLens (Gender) in Figure~\ref{fig:mmd-curves}. We can see that the more frequent the MMD update is, the smaller the MMD distance. 
We also perform principal component analysis (PCA) on sampled state representations of different groups for \textsc{DQN} and \textsc{DQN-MMD} in MovieLens (Gender). The visualization results are presented in Figure~\ref{fig:mmd-pca-visitation}. We can see from Figure~\ref{fig:mmd-pca-visitation} that \textsc{DQN-MMD} helps to align the state visitation distributions of different groups, which is consistent with our theoretical findings in Theorem~\ref{thm:main}.

\begin{figure}[!t]
    \centering
    \includegraphics[width=0.75\linewidth]{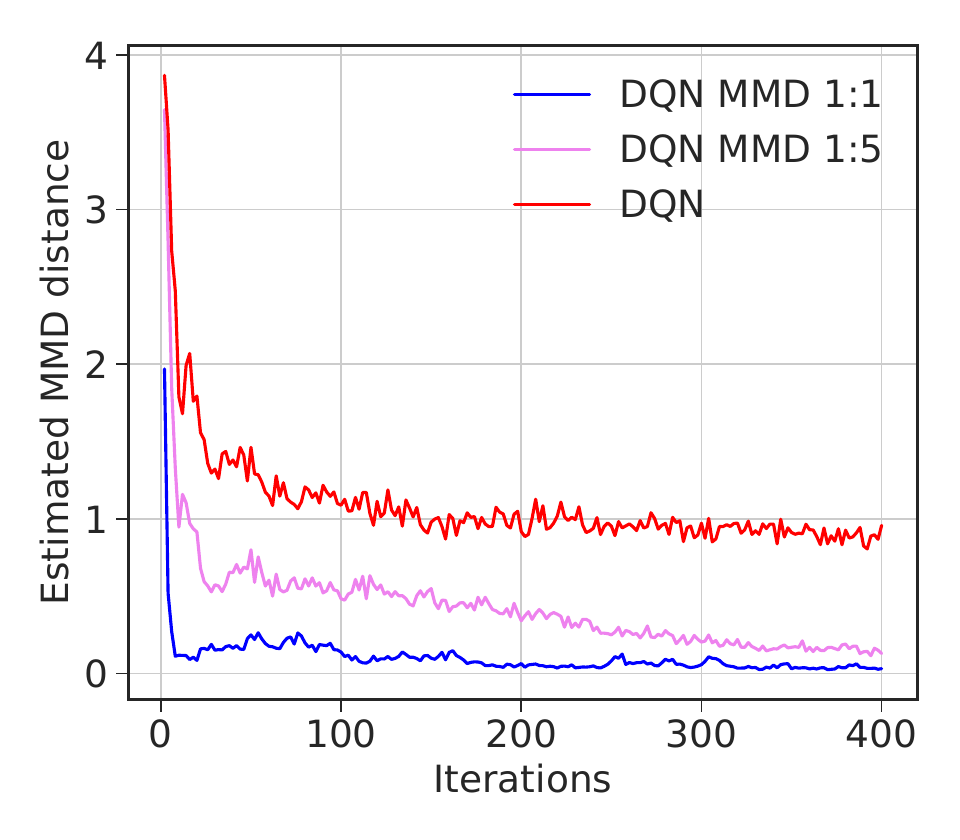}
    \vspace{-1em}
    \caption{Training visualization of estimated MMD distance in MovieLens (Gender). With the increased frequency of MMD update, the MMD distance becomes smaller.}
    \vspace{-1em}
    \label{fig:mmd-curves}
\end{figure}

\begin{figure}[!t]
\begin{subfigure}[b]{0.235\textwidth}
    \includegraphics[width=\textwidth]{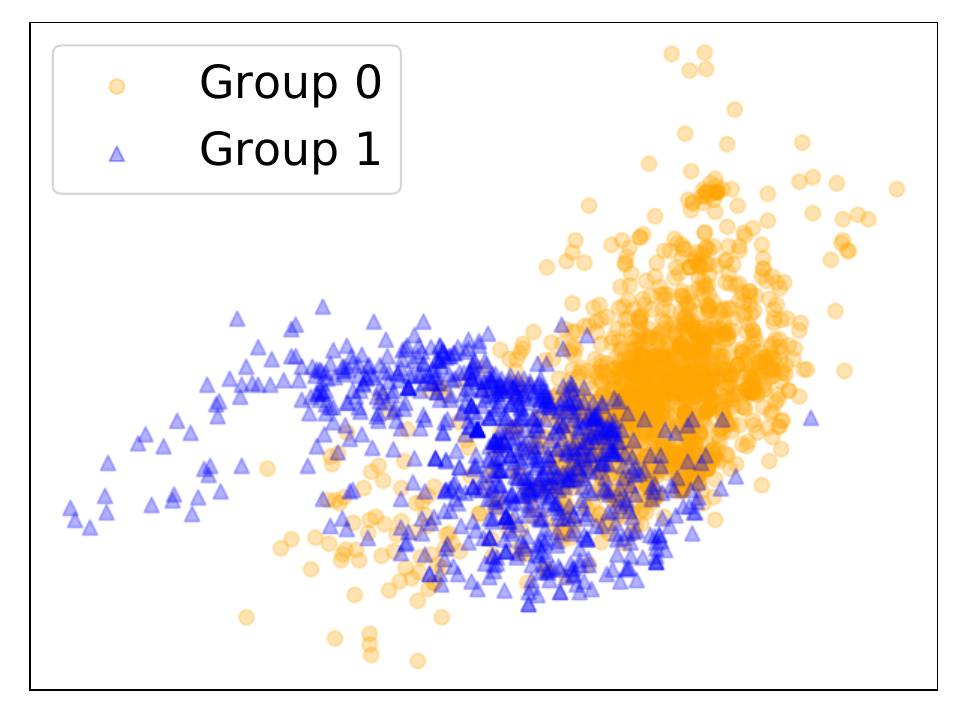}
    \caption{\textsc{DQN}}
  \end{subfigure}
  \begin{subfigure}[b]{0.235\textwidth}
\includegraphics[width=\textwidth]{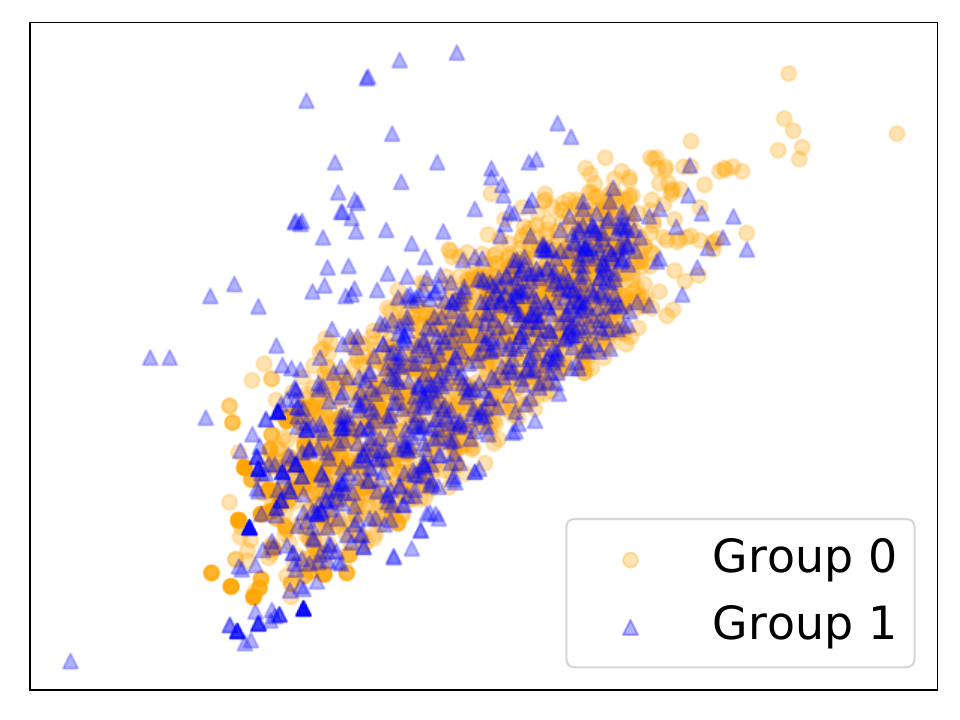}
    \caption{\textsc{DQN-MMD}}
  \end{subfigure}
\caption{PCA visualization of sampled state visitation representations of different groups for \textsc{DQN} and \textsc{DQN-MMD} in MovieLens (Gender). Our method helps to align the state visitation distributions of different groups.}
  \label{fig:mmd-pca-visitation}
\end{figure}

\section{Related Work}

\paragraph{Fairness in MDPs}
\citet{jabbari2017fairness} propose an individual fairness  notion which requires an algorithm to select an action if the long-term (discounted) reward of choosing that action is higher than the others in MDPs;
\citet{wen2019fairness} extend static fairness notions to Markov decision processes and propose model-based and model-free algorithms to maximize expected return while satisfying the fairness constraints. However, their proposed algorithms are based on linear programming or evolutionary algorithm, which cannot be scaled up to solve complex tasks compared to deep reinforcement learning approaches;
\citet{siddique2020learning} consider the fairness of multi-objective MDPs and propose to learn a policy with objective function satisfy the generalized Gini social welfare function~\citep{weymark1981generalized}; \citet{ge2021towards} consider the fairness of item exposure problem in recommendation systems and extend CPO algorithm~\citep{achiam2017constrained} to satisfy the fairness constraint; \citet{thomas2019preventing} propose an offline RL algorithm called quasi-Sheldonian reinforcement learning algorithm to determine whether given sets of policy distributions satisfy a set of return constraints with guarantees; 
\citet{zhang2020fair} analyze how static fairness constraints affect the dynamics of group qualification rates. 
In contrast to their work, we propose the notion of return parity to quantify the dynamics of performance disparity in general changing environments: we theoretically analyze the notion of return parity by providing sufficient conditions where return parity can be satisfied and propose a decomposition theorem for return disparity. Based on our decomposition theorem, we propose algorithms to mitigate return disparity and empirically show the effectiveness of our algorithms.

\paragraph{Fairness under Other Temporal Models}
A series of works focus on study fairness in the setting of online learning~\citep{blum2018preserving, bechavod2019equal, gupta2019individual}, multi-armed bandit~\citep{joseph2016fairness, patil2020achieving, chen2020fair}, and one-step feedback model\citep{liu2018delayed, hu2018short, kannan2019downstream}. 
In particular, \citet{pmlr-v80-hashimoto18a} show that empirical risk minimization amplifies representation disparity over time with a low group retention rate for the underrepresented group. They further propose distributionally robust optimization to minimize the worst-case risk overall group distributions. 
\citet{heidari2018preventing} propose a time-dependent individual fairness notion that requires similar individuals should receive similar outcomes during the same time epoch.
\citet{ensign2018runaway} model predictive policing problem using P\'{o}lya urn model and show that all police officers will be allocated to one location if more officers are constantly assigned to the locations with higher predicted crime rates.
\citet{heidari2019long} propose effort-based fairness, which measures unfairness as the disparity in the effort made by individuals from each group to get a target outcome.
\citet{zhang2019group} quantify the condition of the exacerbation of relative group ratio under the fairness constraints such as demographic parity and equalized odds.

\section{Limitations and Future Work}

One limitation of our approach is that experimental results show trade-offs between return parity and return maximization, which is possible if state (feature) visitation distributions induced by the optimal group policies for the MDPs are largely different. As a future direction, we plan to propose more stable and efficient algorithms to achieve Pareto optimality for return maximization and mitigation of return disparity beyond binary demographic groups.

It is also worth noting that in the special case when the length of time horizon is 1, then our notion of return parity corresponds to accuracy parity in classification settings (\ie, return maximization is reduced to accuracy maximization). However, it remains unclear whether our proposed fairness notion is compatible with other group fairness notions, such as demographic parity and equalized odds. We also leave this analysis as future work as it is a fundamental question that warrants an independent study.

\section{Conclusions}
\label{sec:conclusion}
In this paper, we investigate the problem of return parity in MDPs both theoretically and empirically. In particular, we prove a decomposition theorem for return disparity which decomposes the return disparity of any two MDPs into the distance between group-wise reward functions, the discrepancy of group policies, and the discrepancy between state visitation distributions. We then provide algorithmic interventions to mitigate return disparity via state visitation distributional alignment with IPMs. To corroborate our theoretical results, we conduct experiments on two real-world benchmark datasets. Experimental results suggest that our proposed algorithms help to mitigate return disparity while maintaining the performance of policies. We believe our work takes an important step towards better understanding the dynamics of performance disparity in changing environments. 

\subsubsection*{Acknowledgements}

We thank the anonymous reviewers for their insightful comments.
Jian Shen, Xinyi Dai, and Weinan Zhang acknowledge support from ``New Generation of AI 2030'' Major Project (2018AAA0100900), Shanghai Municipal Science and National Natural Science Foundation of China (62076161).
Jianfeng Chi and Yuan Tian acknowledge support from NSF 1920462, 1943100, 2002985, and a Google research scholar award. Han Zhao would like to thank support from a Facebook research award.

\bibliography{reference}

\begin{thebibliography}{51}
\providecommand{\natexlab}[1]{#1}
\providecommand{\url}[1]{\texttt{#1}}
\expandafter\ifx\csname urlstyle\endcsname\relax
  \providecommand{\doi}[1]{doi: #1}\else
  \providecommand{\doi}{doi: \begingroup \urlstyle{rm}\Url}\fi

\bibitem[Achiam et~al.(2017)Achiam, Held, Tamar, and
  Abbeel]{achiam2017constrained}
Joshua Achiam, David Held, Aviv Tamar, and Pieter Abbeel.
\newblock Constrained policy optimization.
\newblock In \emph{International Conference on Machine Learning}, pages 22--31.
  PMLR, 2017.

\bibitem[Agarwal et~al.(2019)Agarwal, Dudik, and Wu]{agarwal2019fair}
Alekh Agarwal, Miroslav Dudik, and Zhiwei~Steven Wu.
\newblock Fair regression: Quantitative definitions and reduction-based
  algorithms.
\newblock In \emph{International Conference on Machine Learning}, pages
  120--129, 2019.

\bibitem[Arjovsky et~al.(2017)Arjovsky, Chintala, and
  Bottou]{arjovsky2017wasserstein}
Martin Arjovsky, Soumith Chintala, and L{\'e}on Bottou.
\newblock Wasserstein generative adversarial networks.
\newblock In \emph{International conference on machine learning}, pages
  214--223. PMLR, 2017.

\bibitem[Bechavod et~al.(2019)Bechavod, Ligett, Roth, Waggoner, and
  Wu]{bechavod2019equal}
Yahav Bechavod, Katrina Ligett, Aaron Roth, Bo~Waggoner, and Steven~Z Wu.
\newblock Equal opportunity in online classification with partial feedback.
\newblock In \emph{Advances in Neural Information Processing Systems}, pages
  8974--8984, 2019.

\bibitem[Blum et~al.(2018)Blum, Gunasekar, Lykouris, and
  Srebro]{blum2018preserving}
Avrim Blum, Suriya Gunasekar, Thodoris Lykouris, and Nati Srebro.
\newblock On preserving non-discrimination when combining expert advice.
\newblock In \emph{Advances in Neural Information Processing Systems}, pages
  8376--8387, 2018.

\bibitem[Chen et~al.(2019)Chen, Dai, Cai, Zhang, Wang, Tang, Zhang, and
  Yu]{chen2019large}
Haokun Chen, Xinyi Dai, Han Cai, Weinan Zhang, Xuejian Wang, Ruiming Tang,
  Yuzhou Zhang, and Yong Yu.
\newblock Large-scale interactive recommendation with tree-structured policy
  gradient.
\newblock In \emph{Proceedings of the AAAI Conference on Artificial
  Intelligence}, volume~33, pages 3312--3320, 2019.

\bibitem[Chen et~al.(2020)Chen, Cuellar, Luo, Modi, Nemlekar, and
  Nikolaidis]{chen2020fair}
Yifang Chen, Alex Cuellar, Haipeng Luo, Jignesh Modi, Heramb Nemlekar, and
  Stefanos Nikolaidis.
\newblock Fair contextual multi-armed bandits: Theory and experiments.
\newblock In \emph{Conference on Uncertainty in Artificial Intelligence}, pages
  181--190. PMLR, 2020.

\bibitem[Chi et~al.(2021)Chi, Tian, Gordon, and Zhao]{chi2021understanding}
Jianfeng Chi, Yuan Tian, Geoffrey~J Gordon, and Han Zhao.
\newblock Understanding and mitigating accuracy disparity in regression.
\newblock In \emph{International conference on machine learning}, 2021.

\bibitem[Chzhen et~al.(2020)Chzhen, Denis, Hebiri, Oneto, and
  Pontil]{chzhen2020fair}
Evgenii Chzhen, Christophe Denis, Mohamed Hebiri, Luca Oneto, and Massimiliano
  Pontil.
\newblock Fair regression with wasserstein barycenters.
\newblock \emph{arXiv preprint arXiv:2006.07286}, 2020.

\bibitem[De~Farias(2002)]{de2002linear}
Daniela~Pucci De~Farias.
\newblock \emph{The linear programming approach to approximate dynamic
  programming: Theory and application}.
\newblock PhD thesis, stanford university, 2002.

\bibitem[De~Farias and Van~Roy(2003)]{de2003linear}
Daniela~Pucci De~Farias and Benjamin Van~Roy.
\newblock The linear programming approach to approximate dynamic programming.
\newblock \emph{Operations research}, 51\penalty0 (6):\penalty0 850--865, 2003.

\bibitem[Dulac-Arnold et~al.(2015)Dulac-Arnold, Evans, van Hasselt, Sunehag,
  Lillicrap, Hunt, Mann, Weber, Degris, and Coppin]{dulac2015deep}
Gabriel Dulac-Arnold, Richard Evans, Hado van Hasselt, Peter Sunehag, Timothy
  Lillicrap, Jonathan Hunt, Timothy Mann, Theophane Weber, Thomas Degris, and
  Ben Coppin.
\newblock Deep reinforcement learning in large discrete action spaces.
\newblock \emph{arXiv preprint arXiv:1512.07679}, 2015.

\bibitem[Dwork et~al.(2012)Dwork, Hardt, Pitassi, Reingold, and
  Zemel]{dwork2012fairness}
Cynthia Dwork, Moritz Hardt, Toniann Pitassi, Omer Reingold, and Richard Zemel.
\newblock Fairness through awareness.
\newblock In \emph{Proceedings of the 3rd innovations in theoretical computer
  science conference}, pages 214--226, 2012.

\bibitem[Elzayn et~al.(2019)Elzayn, Jabbari, Jung, Kearns, Neel, Roth, and
  Schutzman]{elzayn2019fair}
Hadi Elzayn, Shahin Jabbari, Christopher Jung, Michael Kearns, Seth Neel, Aaron
  Roth, and Zachary Schutzman.
\newblock Fair algorithms for learning in allocation problems.
\newblock In \emph{Proceedings of the Conference on Fairness, Accountability,
  and Transparency}, pages 170--179, 2019.

\bibitem[Ensign et~al.(2018)Ensign, Friedler, Neville, Scheidegger, and
  Venkatasubramanian]{ensign2018runaway}
Danielle Ensign, Sorelle~A Friedler, Scott Neville, Carlos Scheidegger, and
  Suresh Venkatasubramanian.
\newblock Runaway feedback loops in predictive policing.
\newblock In \emph{Conference on Fairness, Accountability and Transparency},
  pages 160--171, 2018.

\bibitem[Ge et~al.(2021)Ge, Liu, Gao, Xian, Li, Zhao, Pei, Sun, Ge, Ou,
  et~al.]{ge2021towards}
Yingqiang Ge, Shuchang Liu, Ruoyuan Gao, Yikun Xian, Yunqi Li, Xiangyu Zhao,
  Changhua Pei, Fei Sun, Junfeng Ge, Wenwu Ou, et~al.
\newblock Towards long-term fairness in recommendation.
\newblock \emph{arXiv preprint arXiv:2101.03584}, 2021.

\bibitem[Gulrajani et~al.(2017)Gulrajani, Ahmed, Arjovsky, Dumoulin, and
  Courville]{gulrajani2017Improved}
Ishaan Gulrajani, Faruk Ahmed, Martin Arjovsky, Vincent Dumoulin, and Aaron
  Courville.
\newblock Improved training of wasserstein gans.
\newblock In \emph{Proceedings of the 31st International Conference on Neural
  Information Processing Systems}, NIPS'17, page 5769–5779, Red Hook, NY,
  USA, 2017. Curran Associates Inc.
\newblock ISBN 9781510860964.

\bibitem[Gupta and Kamble(2019)]{gupta2019individual}
Swati Gupta and Vijay Kamble.
\newblock Individual fairness in hindsight.
\newblock In \emph{Proceedings of the 2019 ACM Conference on Economics and
  Computation}, pages 805--806, 2019.

\bibitem[Hardt et~al.(2016)Hardt, Price, and Srebro]{hardt2016equality}
Moritz Hardt, Eric Price, and Nati Srebro.
\newblock Equality of opportunity in supervised learning.
\newblock \emph{Advances in Neural Information Processing Systems},
  29:\penalty0 3315--3323, 2016.

\bibitem[Hashimoto et~al.(2018)Hashimoto, Srivastava, Namkoong, and
  Liang]{pmlr-v80-hashimoto18a}
Tatsunori Hashimoto, Megha Srivastava, Hongseok Namkoong, and Percy Liang.
\newblock Fairness without demographics in repeated loss minimization.
\newblock In Jennifer Dy and Andreas Krause, editors, \emph{Proceedings of the
  35th International Conference on Machine Learning}, volume~80 of
  \emph{Proceedings of Machine Learning Research}, pages 1929--1938,
  Stockholmsmässan, Stockholm Sweden, 10--15 Jul 2018. PMLR.
\newblock URL \url{http://proceedings.mlr.press/v80/hashimoto18a.html}.

\bibitem[Heidari and Krause(2018)]{heidari2018preventing}
Hoda Heidari and Andreas Krause.
\newblock Preventing disparate treatment in sequential decision making.
\newblock In \emph{IJCAI}, pages 2248--2254, 2018.

\bibitem[Heidari et~al.(2019)Heidari, Nanda, and Gummadi]{heidari2019long}
Hoda Heidari, Vedant Nanda, and Krishna Gummadi.
\newblock On the long-term impact of algorithmic decision policies: Effort
  unfairness and feature segregation through social learning.
\newblock In \emph{International Conference on Machine Learning}, pages
  2692--2701, 2019.

\bibitem[Hu and Chen(2018)]{hu2018short}
Lily Hu and Yiling Chen.
\newblock A short-term intervention for long-term fairness in the labor market.
\newblock In \emph{Proceedings of the 2018 World Wide Web Conference}, pages
  1389--1398, 2018.

\bibitem[Jabbari et~al.(2017)Jabbari, Joseph, Kearns, Morgenstern, and
  Roth]{jabbari2017fairness}
Shahin Jabbari, Matthew Joseph, Michael Kearns, Jamie Morgenstern, and Aaron
  Roth.
\newblock Fairness in reinforcement learning.
\newblock In \emph{International Conference on Machine Learning}, pages
  1617--1626, 2017.

\bibitem[Jiang et~al.(2020)Jiang, Pacchiano, Stepleton, Jiang, and
  Chiappa]{jiang2020wasserstein}
Ray Jiang, Aldo Pacchiano, Tom Stepleton, Heinrich Jiang, and Silvia Chiappa.
\newblock Wasserstein fair classification.
\newblock In \emph{Uncertainty in Artificial Intelligence}, pages 862--872.
  PMLR, 2020.

\bibitem[Joseph et~al.(2016)Joseph, Kearns, Morgenstern, and
  Roth]{joseph2016fairness}
Matthew Joseph, Michael Kearns, Jamie~H Morgenstern, and Aaron Roth.
\newblock Fairness in learning: Classic and contextual bandits.
\newblock In \emph{Advances in Neural Information Processing Systems}, pages
  325--333, 2016.

\bibitem[Kannan et~al.(2019)Kannan, Roth, and Ziani]{kannan2019downstream}
Sampath Kannan, Aaron Roth, and Juba Ziani.
\newblock Downstream effects of affirmative action.
\newblock In \emph{Proceedings of the Conference on Fairness, Accountability,
  and Transparency}, pages 240--248, 2019.

\bibitem[Komiyama et~al.(2018)Komiyama, Takeda, Honda, and
  Shimao]{komiyama2018nonconvex}
Junpei Komiyama, Akiko Takeda, Junya Honda, and Hajime Shimao.
\newblock Nonconvex optimization for regression with fairness constraints.
\newblock In \emph{International conference on machine learning}, pages
  2737--2746, 2018.

\bibitem[Koren et~al.(2009)Koren, Bell, and Volinsky]{koren2009matrix}
Yehuda Koren, Robert Bell, and Chris Volinsky.
\newblock Matrix factorization techniques for recommender systems.
\newblock \emph{Computer}, 42\penalty0 (8):\penalty0 30--37, 2009.

\bibitem[Lillicrap et~al.(2015)Lillicrap, Hunt, Pritzel, Heess, Erez, Tassa,
  Silver, and Wierstra]{lillicrap2015continuous}
Timothy~P Lillicrap, Jonathan~J Hunt, Alexander Pritzel, Nicolas Heess, Tom
  Erez, Yuval Tassa, David Silver, and Daan Wierstra.
\newblock Continuous control with deep reinforcement learning.
\newblock \emph{arXiv preprint arXiv:1509.02971}, 2015.

\bibitem[Liu et~al.(2020{\natexlab{a}})Liu, Guo, Li, Tang, Ye, and
  He]{liu2020end}
Feng Liu, Huifeng Guo, Xutao Li, Ruiming Tang, Yunming Ye, and Xiuqiang He.
\newblock End-to-end deep reinforcement learning based recommendation with
  supervised embedding.
\newblock In \emph{Proceedings of the 13th International Conference on Web
  Search and Data Mining}, pages 384--392, 2020{\natexlab{a}}.

\bibitem[Liu et~al.(2018)Liu, Dean, Rolf, Simchowitz, and
  Hardt]{liu2018delayed}
Lydia~T Liu, Sarah Dean, Esther Rolf, Max Simchowitz, and Moritz Hardt.
\newblock Delayed impact of fair machine learning.
\newblock In \emph{International Conference on Machine Learning}, pages
  3150--3158, 2018.

\bibitem[Liu et~al.(2020{\natexlab{b}})Liu, Wilson, Haghtalab, Kalai, Borgs,
  and Chayes]{liu2020disparate}
Lydia~T Liu, Ashia Wilson, Nika Haghtalab, Adam~Tauman Kalai, Christian Borgs,
  and Jennifer Chayes.
\newblock The disparate equilibria of algorithmic decision making when
  individuals invest rationally.
\newblock In \emph{Proceedings of the 2020 Conference on Fairness,
  Accountability, and Transparency}, pages 381--391, 2020{\natexlab{b}}.

\bibitem[Lum and Isaac(2016)]{lum2016predict}
Kristian Lum and William Isaac.
\newblock To predict and serve?
\newblock \emph{Significance}, 13\penalty0 (5):\penalty0 14--19, 2016.

\bibitem[Mnih et~al.(2015)Mnih, Kavukcuoglu, Silver, Rusu, Veness, Bellemare,
  Graves, Riedmiller, Fidjeland, Ostrovski, et~al.]{mnih2015human}
Volodymyr Mnih, Koray Kavukcuoglu, David Silver, Andrei~A Rusu, Joel Veness,
  Marc~G Bellemare, Alex Graves, Martin Riedmiller, Andreas~K Fidjeland, Georg
  Ostrovski, et~al.
\newblock Human-level control through deep reinforcement learning.
\newblock \emph{nature}, 518\penalty0 (7540):\penalty0 529--533, 2015.

\bibitem[Mouzannar et~al.(2019)Mouzannar, Ohannessian, and
  Srebro]{mouzannar2019fair}
Hussein Mouzannar, Mesrob~I Ohannessian, and Nathan Srebro.
\newblock From fair decision making to social equality.
\newblock In \emph{Proceedings of the Conference on Fairness, Accountability,
  and Transparency}, pages 359--368, 2019.

\bibitem[M{\"u}ller(1997)]{muller1997integral}
Alfred M{\"u}ller.
\newblock Integral probability metrics and their generating classes of
  functions.
\newblock \emph{Advances in Applied Probability}, 29\penalty0 (2):\penalty0
  429--443, 1997.

\bibitem[Patil et~al.(2020)Patil, Ghalme, Nair, and
  Narahari]{patil2020achieving}
Vishakha Patil, Ganesh Ghalme, Vineet Nair, and Y~Narahari.
\newblock Achieving fairness in the stochastic multi-armed bandit problem.
\newblock In \emph{AAAI}, pages 5379--5386, 2020.

\bibitem[Siddique et~al.(2020)Siddique, Weng, and Zimmer]{siddique2020learning}
Umer Siddique, Paul Weng, and Matthieu Zimmer.
\newblock Learning fair policies in multi-objective (deep) reinforcement
  learning with average and discounted rewards.
\newblock In \emph{International Conference on Machine Learning}, pages
  8905--8915. PMLR, 2020.

\bibitem[Thomas et~al.(2019)Thomas, da~Silva, Barto, Giguere, Brun, and
  Brunskill]{thomas2019preventing}
Philip~S Thomas, Bruno~Castro da~Silva, Andrew~G Barto, Stephen Giguere, Yuriy
  Brun, and Emma Brunskill.
\newblock Preventing undesirable behavior of intelligent machines.
\newblock \emph{Science}, 366\penalty0 (6468):\penalty0 999--1004, 2019.

\bibitem[Thrun and Schwartz(1993)]{thrun1993issues}
Sebastian Thrun and Anton Schwartz.
\newblock Issues in using function approximation for reinforcement learning.
\newblock In \emph{Proceedings of the Fourth Connectionist Models Summer
  School}, pages 255--263. Hillsdale, NJ, 1993.

\bibitem[Van~Hasselt et~al.(2016)Van~Hasselt, Guez, and Silver]{van2016deep}
Hado Van~Hasselt, Arthur Guez, and David Silver.
\newblock Deep reinforcement learning with double q-learning.
\newblock In \emph{Proceedings of the AAAI conference on artificial
  intelligence}, volume~30, 2016.

\bibitem[Wen et~al.(2019)Wen, Bastani, and Topcu]{wen2019fairness}
Min Wen, Osbert Bastani, and Ufuk Topcu.
\newblock Fairness with dynamics.
\newblock \emph{arXiv preprint arXiv:1901.08568}, 2019.

\bibitem[Weymark(1981)]{weymark1981generalized}
John~A Weymark.
\newblock Generalized gini inequality indices.
\newblock \emph{Mathematical Social Sciences}, 1\penalty0 (4):\penalty0
  409--430, 1981.

\bibitem[Wright(2015)]{wright2015coordinate}
Stephen~J Wright.
\newblock Coordinate descent algorithms.
\newblock \emph{Mathematical Programming}, 151\penalty0 (1):\penalty0 3--34,
  2015.

\bibitem[Zafar et~al.(2017)Zafar, Valera, Rogriguez, and
  Gummadi]{zafar2017fairness}
Muhammad~Bilal Zafar, Isabel Valera, Manuel~Gomez Rogriguez, and Krishna~P
  Gummadi.
\newblock Fairness constraints: Mechanisms for fair classification.
\newblock In \emph{Artificial Intelligence and Statistics}, pages 962--970,
  2017.

\bibitem[Zhang et~al.(2019)Zhang, Khaliligarekani, Tekin,
  et~al.]{zhang2019group}
Xueru Zhang, Mohammadmahdi Khaliligarekani, Cem Tekin, et~al.
\newblock Group retention when using machine learning in sequential decision
  making: the interplay between user dynamics and fairness.
\newblock In \emph{Advances in Neural Information Processing Systems}, pages
  15269--15278, 2019.

\bibitem[Zhang et~al.(2020)Zhang, Tu, Liu, Kjellstrom, Zhang, Zhang,
  et~al.]{zhang2020fair}
Xueru Zhang, Ruibo Tu, Yang Liu, Hedvig Kjellstrom, Kun Zhang, Cheng Zhang,
  et~al.
\newblock How do fair decisions fare in long-term qualification?
\newblock \emph{Advances in Neural Information Processing Systems}, 33, 2020.

\bibitem[Zhao and Gordon(2019)]{zhao2019inherent}
Han Zhao and Geoffrey~J Gordon.
\newblock Inherent tradeoffs in learning fair representations.
\newblock In \emph{Advances in neural information processing systems}, 2019.

\bibitem[Zheng et~al.(2018)Zheng, Zhang, Zheng, Xiang, Yuan, Xie, and
  Li]{zheng2018drn}
Guanjie Zheng, Fuzheng Zhang, Zihan Zheng, Yang Xiang, Nicholas~Jing Yuan, Xing
  Xie, and Zhenhui Li.
\newblock Drn: A deep reinforcement learning framework for news recommendation.
\newblock In \emph{Proceedings of the 2018 World Wide Web Conference}, pages
  167--176, 2018.

\bibitem[Zhou et~al.(2020)Zhou, Dai, Chen, Zhang, Ren, Tang, He, and
  Yu]{zhou2020interactive}
Sijin Zhou, Xinyi Dai, Haokun Chen, Weinan Zhang, Kan Ren, Ruiming Tang,
  Xiuqiang He, and Yong Yu.
\newblock Interactive recommender system via knowledge graph-enhanced
  reinforcement learning.
\newblock In \emph{Proceedings of the 43rd International ACM SIGIR Conference
  on Research and Development in Information Retrieval}, pages 179--188, 2020.

\end{thebibliography}
\bibliographystyle{plainnat}

\newpage
\onecolumn
\appendix
\section*{Appendix}

\section{Missing Proofs}
\label{app-sec:proof}

\subsection{Proof of Proposition~\ref{prop:imp}}

\ImpossiblityResults*

\begin{proof}
Consider two MDPs share two states $s_1$ and $s_2$. Let $r(s_1, a)= c (1-\gamma) > 0$ and $r(s_2, a) = 0,~\forall a\in\aaspace$, $T(s_2\mid s_1, a)=T(s_1\mid s_2, a) = 0$ and $T(s_1\mid s_1, a)=T(s_2\mid s_2, a) = 1,~\forall a\in\aaspace$. Given $\mu_0 = [1, 0]^T$ and $\mu_1 = [0, 1]^T$, then the expected return for group $0$ is $c$, while the expected return for group $1$ is $0$. In this case, the return gap $\retgap = c \geq 0.$
\label{prop:impossibility}
\end{proof}

\subsection{Proof of Proposition~\ref{prop:suff_v2}}

\PossiblityResults*

\begin{proof}
Let $\mu_0$ and $\mu_1$ be initial distributions of group $0$ and $1$, respectively, and denote $V^{\pi_0} = [v^{\pi_0}(s_1), \dots, v^{\pi_0}(s_m)]^T$ and $V^{\pi_1} = [v^{\pi_1}(s_1), \dots, v^{\pi_1}(s_m)]^T$. By definition, in order to satisfy return parity, the following equation should be satisfied:  
\begin{equation}
    \nonumber
    \mu_0^T V^{\pi_0} = \mu_0^T V^{\pi_1}.
\end{equation}

Consider solving the optimal value function via linear programming while satisfying the constraint of return parity, then the whole optimization problem becomes 
\begin{equation}
    \nonumber
    \begin{aligned}
    \min_{V^{\pi_0}, V^{\pi_1}}&\quad \lambda \mu_0^T V^{\pi_0} + (1-\lambda) \mu_1^T V^{\pi_1}\\
    \text{s.t.}&\quad v^{\pi_0}(s) \geq r_0(s, a) + \gamma \sum_{s'} T_0(s'\mid s, a) v^{\pi_0}(s'), \quad\forall a \in \aaspace, s \in \sspace\\
    &\quad v^{\pi_1}(s) \geq r_1(s, a) + \gamma \sum_{s'} T_1(s'\mid s, a) v^{\pi_1}(s'), \quad\forall a \in \aaspace, s \in \sspace\\
    &\quad | \mu_0^T V^{\pi_0} - \mu_1^T V^{\pi_1} | \leq \epsilon\\
    \end{aligned}
\end{equation} 

Next, we convert the constraints of the LP in the standard form:
\begin{equation}
    \nonumber
    \begin{aligned}
    ( \gamma T_0(s\mid s, a) - 1 ) v^{\pi_0}(s) + \gamma \sum_{s'\neq s} T_0(s'\mid s, a) v^{\pi_0}(s') &\leq - r_0(s, a) \quad\forall a \in \aaspace, s \in \sspace\\
    ( \gamma T_1(s\mid s, a) - 1 ) v^{\pi_1}(s) + \gamma \sum_{s'\neq s} T_1(s'\mid s, a) v^{\pi_1}(s') &\leq - r_1(s, a) \quad\forall a \in \aaspace, s \in \sspace\\
    \mu_0^T V^{\pi_0} - \mu_1^T V^{\pi_1} &\leq \epsilon\\
    \mu_1^T V^{\pi_1} - \mu_0^T V^{\pi_0} &\leq \epsilon.
    \end{aligned}
\end{equation}

The variant of Farkas' Lemma states that either the system $\mathbf {Ax} \leq \mathbf {b}$ has a solution with $\mathbf {x} \geq 0$, or the system $\mathbf {A} ^{\mathsf {T}}\mathbf {y} \geq 0$ has a solution with $\mathbf {b} ^{\mathsf {T}}\mathbf {y} <0$ and $\mathbf {y} \geq 0$. In other words, if we want the above system has a non-negative solution,
we can easily show that $\nexists~\hat{\rho}_0 (s, a), \hat{\rho}_1 (s, a), b_0, b_1\geq 0$, $\forall~i\in[m]$ such that 
\begin{equation}
    \nonumber
    \begin{aligned}
     \sum_s \sum_a T_0(s_i \mid s, a) \hat{\rho}_0 (s, a) - \sum_a \hat{\rho}_0 (s_i, a) + (b_0 - b_1) (\mu_0)_i &\geq 0\\
     \sum_s \sum_a T_1(s_i \mid s, a) \hat{\rho}_1 (s, a) - \sum_a \hat{\rho}_1 (s_i, a) + (b_1 - b_0) (\mu_1)_i &\geq 0\\
     \sum_s \sum_a \hat{\rho}_0 (s, a) r_0(s, a) + \hat{\rho}_1 (s, a) r_1(s, a) - (b_0 + b_1) \epsilon &> 0\\
    \end{aligned}
\end{equation}

By the law of contraposition, it is equivalent to its contrapositive: $\forall~\hat{\rho}_0 (s, a), \hat{\rho}_1 (s, a), b_0, b_1\geq 0$, $\exists~i\in[m]$ such that
\begin{equation}
    \nonumber
    \begin{aligned}
     \sum_s \sum_a T_0(s_i \mid s, a) \hat{\rho}_0 (s, a) - \sum_a \hat{\rho}_0 (s_i, a) + (b_0 - b_1) (\mu_0)_i &< 0\\
     \sum_s \sum_a T_1(s_i \mid s, a) \hat{\rho}_1 (s, a) - \sum_a \hat{\rho}_1 (s_i, a) + (b_1 - b_0) (\mu_1)_i &< 0\\
     \sum_s \sum_a \hat{\rho}_0 (s, a) r_0(s, a) + \hat{\rho}_1 (s, a) r_1(s, a) - (b_0 + b_1) \epsilon &\leq 0\\
    \end{aligned}
\end{equation}

Reorganizing the above equations then completes the proof.
\end{proof}

\subsection{Proof of Theorem~\ref{thm:main}}

\rewardDisparityUpperBoundState*

\begin{proof}
By definition, the return disparity is
\begin{equation}
    \nonumber
    \begin{aligned}
    \retgap=&~ \big|\Exp_{s\in \mu_0}[v^{\pi}(s)] - \Exp_{s\in \mu_1}[v^{\pi}(s)] \big| \\
    =&~ \frac{1}{1-\gamma} \bigg| \sum_{s\in \sspace}\sum_{a\in \aaspace} r_0(s, a) \rho^{\pi_0}(s, a) - \sum_{s\in \sspace}\sum_{a\in \aaspace} r_1(s, a) \rho^{\pi_1}(s, a) \bigg| \\
    =&~ \frac{1}{1-\gamma} \bigg| \sum_{s\in \sspace}\sum_{a\in \aaspace} r_0(s, a) \rho^{\pi_0}(s, a) - r_1(s, a) \rho^{\pi_1}(s, a) \bigg|\\
    =&~ \frac{1}{1-\gamma} \bigg| \sum_{s\in \sspace}\sum_{a\in \aaspace} r_0(s, a)\rho^{\pi_0}(s, a) - r_1(s, a)\rho^{\pi_0}(s, a) + r_1(s, a)\rho^{\pi_0}(s, a) - r_1(s, a) \rho^{\pi_1}(s, a) \bigg|\\
    \leq&~ \frac{1}{1-\gamma} \bigg| \sum_{s\in \sspace}\sum_{a\in \aaspace} r_0(s, a)\rho^{\pi_0}(s, a) - r_1(s, a)\rho^{\pi_0}(s, a)\bigg| + \frac{1}{1-\gamma} \bigg| \sum_{s\in \sspace}\sum_{a\in \aaspace} r_1(s, a)\rho^{\pi_0}(s, a) - r_1(s, a) \rho^{\pi_1}(s, a) \bigg|,\\
    \end{aligned}
\end{equation}
where the first term is upper bounded by 
\begin{equation}
    \nonumber
    \begin{aligned}
    \bigg| \sum_{s\in \sspace}\sum_{a\in \aaspace} r_0(s, a)\rho^{\pi_0}(s, a) - r_1(s, a)\rho^{\pi_0}(s, a)\bigg| =&~\bigg| \sum_{s\in \sspace}\sum_{a\in \aaspace} \big(r_0(s, a)- r_1(s, a) \big) \rho^{\pi_0}(s, a)\bigg| \\
    \leq& \max_{s, a} | r_0(s, a) - r_1(s, a) | \sum_{s\in \sspace}\sum_{a\in \aaspace} \rho^{\pi_0}(s, a) = \|r_0 - r_1\|_\infty,\\
    \end{aligned}
\end{equation}
and the second term is upper bounded by
\begin{equation}
    \begin{aligned}
     &~\bigg| \sum_{s\in \sspace}\sum_{a\in \aaspace} r_1(s, a)\rho^{\pi_0}(s, a) - r_1(s, a) \rho^{\pi_1}(s, a) \bigg|\\
     =&~\bigg| \sum_{s\in \sspace}\sum_{a\in \aaspace} r_1(s, a)\mu^{\pi_0}(s) \pi_0(a \mid s) - r_1(s, a) \mu^{\pi_1}(s) \pi_1(a \mid s) \bigg| \\
     =&~\bigg| \sum_{s\in \sspace}\sum_{a\in \aaspace} r_1(s, a)\mu^{\pi_0}(s) \pi_0(a \mid s) - r_1(s, a)\mu^{\pi_0}(s) \pi_1(a \mid s) + r_1(s, a)\mu^{\pi_0}(s) \pi_1(a \mid s) - r_1(s, a) \mu^{\pi_1}(s) \pi_1(a \mid s) \bigg|  \\
     \leq&~\bigg| \sum_{s\in \sspace}\sum_{a\in \aaspace} r_1(s, a)\mu^{\pi_0}(s) \pi_0(a \mid s) - r_1(s, a)\mu^{\pi_0}(s) \pi_1(a \mid s) \bigg| \\
     &~~ + \bigg| \sum_{s\in \sspace}\sum_{a\in \aaspace} r_1(s, a)\mu^{\pi_0}(s) \pi_1(a \mid s) - r_1(s, a) \mu^{\pi_1}(s) \pi_1(a \mid s) \bigg|. \\
    \end{aligned}
    \label{eq:proof-main}
\end{equation}

The first term in the upper bound of~\ref{eq:proof-main} is
\begin{equation}
    \nonumber
    \begin{aligned}
    \bigg| \sum_{s\in \sspace}\sum_{a\in \aaspace} r_1(s, a)\mu^{\pi_0}(s) \pi_0(a \mid s) - r_1(s, a)\mu^{\pi_0}(s) \pi_1(a \mid s) \bigg|\leq&~R~  \sum_{s\in \sspace}\sum_{a\in \aaspace}  \mu^{\pi_0}(s) \bigg| \pi_0(a \mid s) - \pi_1(a \mid s) \bigg|\\
    \leq&~ R~ \Exp_{s\sim\mu^{\pi_0}}\bigg[ \sum_{a\in\aaspace} \bigg| \pi_0(a \mid s) - \pi_1(a \mid s) \bigg| \bigg] \\
    =&~ R~ \Exp_{s\sim\mu^{\pi_0}}\bigg[ \sum_{a\in\aaspace} \|\pi_0(\cdot\mid s) - \pi_1(\cdot\mid s)\|_1 \bigg] , \\
    \end{aligned}
\end{equation}
and the second term in the upper bound of~\ref{eq:proof-main} is
\begin{equation}
    \nonumber
    \begin{aligned}
     &~\bigg| \sum_{s\in \sspace}\sum_{a\in \aaspace} r_1(s, a)\mu^{\pi_0}(s) \pi_1(a \mid s) - r_1(s, a) \mu^{\pi_1}(s) \pi_1(a \mid s) \bigg| \\
     =&~ \bigg| \sum_{s\in \sspace} \mu^{\pi_0}(s) \sum_{a\in \aaspace} r_1(s, a) \pi_1(a \mid s) - \sum_{s\in \sspace} \mu^{\pi_1}(s) \sum_{a\in \aaspace} r_1(s, a) \pi_1(a \mid s) \bigg| \\
     =&~ \bigg| \sum_{s\in \sspace} \mu^{\pi_0}(s)~r_1(s)  - \sum_{s\in \sspace} \mu^{\pi_1}(s)~r_1(s) \bigg| \\
     \leq&~ \sup_{r_1(s) \in \mathcal{F}} \bigg| \sum_{s\in \sspace} \mu^{\pi_0}(s)~r_1(s)  - \sum_{s\in \sspace} \mu^{\pi_1}(s)~r_1(s) \bigg| \\
     =&~ d_{\mathcal{F}} (\mu^{\pi_0}(s), \mu^{\pi_1}(s)). \\
    \end{aligned}
\end{equation}
By symmetry of \ref{eq:proof-main}, we also have 
\begin{equation}
    \nonumber
    \begin{aligned}
     &~\bigg| \sum_{s\in \sspace}\sum_{a\in \aaspace} r_1(s, a)\rho^{\pi_0}(s, a) - r_1(s, a) \rho^{\pi_1}(s, a) \bigg|\\
     \leq&~ R~ \Exp_{s\sim\mu^{\pi_1}}\bigg[ \sum_{a\in\aaspace} \|\pi_0(\cdot\mid s) - \pi_1(\cdot\mid s)\|_1 \bigg] + d_{\mathcal{F}} (\mu^{\pi_0}(s), \mu^{\pi_1}(s)) \\
    \end{aligned}
\end{equation}
Combining the above results then completes the proof.
\end{proof}

\newpage
\section{Experimental Details}
\label{app-sec:exp-detail}

\paragraph{MovieLen-1M}
The original rating matrix of the dataset is a sparse matrix. In order to learn a good item embedding for each user, we first filter out the items which get less than 64 ratings (the dataset has ensured that each user has at least 20 ratings). We take gender (\eg, male/female) and age (\eg, $\geq45$~/~$<45$ years old) as the binary demographic groups. Similar to~\citep{zhou2020interactive}, we also perform the matrix completion~\citep{koren2009matrix} on the original rating matrix to avoid the sparsity of rating signals in the original data.
To mimic the real-world scenarios where the number of users could be skewed across different demographic groups, we downsample one group’s data by a factor of 10. Similar to~\citep{chen2019large, liu2020end}, we fix the item embedding when updating our models and pre-train the RNNs.\footnote{\url{https://github.com/chenhaokun/TPGR}} The dimension of the item embedding vector, reward vector and user status (\ie, the inputs of RNNs) are 50, 20, and 9, respectively. Thus, the state dimension is 88.
We give the details of training hyperparameters in Table~\ref{table:hyper}.

\paragraph{Book-Crossing}
Similar to MovieLen-1M, we first filter out the items which get less than 32 ratings and users who have less than 16 ratings. We take age (\eg, $\geq35$~/~$<35$ years old) as the binary demographic groups. The rest of data pre-processing pipelines are similar to those in the MovieLen dataset. The dimension of the item embedding vector, reward vector and user status (\ie, the inputs of RNNs) are 60, 20 and 9, respectively. Thus, the state dimension is 98.
We give the details of training hyperparameters in Table~\ref{table:hyper}.

\begin{table}[htbp]
    \centering
    \begin{tabular}{lccc}
    \toprule
     & MovieLens (Gender) & MovieLens (Age) & Book-Crossing (Age) \\
     \midrule
     Q-network &  \multicolumn{3}{c}{MLP with hidden size [128]} \\
     Soft-update frequency $\tau$ &  \multicolumn{3}{c}{0.99}  \\
     Iteration & \multicolumn{3}{c}{400} \\
     Learning Rate &  \multicolumn{3}{c}{1e-3}  \\
     Q-network Weight Decay &  \multicolumn{3}{c}{1e-6}  \\
     $\varepsilon$-greedy Policy &  \multicolumn{3}{c}{Linear decay from max $\varepsilon=1.0$ to min $\varepsilon=0.1$ with decay steps 160} \\
     Sample Batch Size & \multicolumn{3}{c}{1000}  \\
     Update Batch Size &  \multicolumn{3}{c}{10000}  \\
     Update per Iteration &  \multicolumn{3}{c}{10}  \\
     Buffer Size & \multicolumn{3}{c}{200000}  \\
     Feature Extractor & \multicolumn{3}{c}{MLP with hidden size the same as the state dimension}  \\
    \bottomrule
    \end{tabular}
    \caption{Hyperparameter settings in our experiments.}
    \label{table:hyper}
\end{table}

\section{Another Decomposition Theorem for Return Disparity}
\label{app-sec:thm}

\begin{restatable}{theorem}{rewardDisparityUpperBoundOcc}
For $g\in \{0, 1\}$, given policies $\pi_0, \pi_1\in \Pi$ and assume there exists a witness function class $\mathcal{F} = \{f: \sspace \times \aaspace \to \RR \}$, such that the reward functions $r_g(\cdot, \cdot)\in\mathcal{F}$ for $\forall~s\in\sspace, a\in\aaspace$ and $g\in\{0, 1\}$, then the following holds: 

\begin{equation*}
 \retgap \leq \frac{1}{1-\gamma} \bigg( \|r_0 - r_1\|_\infty + d_{\mathcal{F}}\big( \rho^{\pi_0}(s, a),  \rho^{\pi_1}(s, a) \big) \bigg),\\
\end{equation*}
 \label{thm:occ}
\end{restatable}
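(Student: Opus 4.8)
The plan is to mirror the proof of Theorem~\ref{thm:main} but to work directly with the occupancy measures $\rho^{\pi_g}$ instead of decomposing them into state visitation distributions and conditional policies; this shortens the argument since the policy-discrepancy term disappears entirely. The starting point is the linear-programming identity for the expected return from the preliminaries, namely $\Exp_{s\in\mu_g}[v^{\pi_g}(s)] = \tfrac{1}{1-\gamma}\sum_{s\in\sspace}\sum_{a\in\aaspace} r_g(s,a)\rho^{\pi_g}(s,a)$, which lets us write
\begin{equation}
  \nonumber
  \retgap = \frac{1}{1-\gamma}\left| \sum_{s\in\sspace}\sum_{a\in\aaspace} r_0(s,a)\rho^{\pi_0}(s,a) - r_1(s,a)\rho^{\pi_1}(s,a) \right|.
\end{equation}

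First I would insert the mixed term $r_1(s,a)\rho^{\pi_0}(s,a)$, add and subtract it inside the sum, and apply the triangle inequality to split $\retgap$ into (i) a ``reward mismatch'' term $\bigl|\sum_{s,a}(r_0(s,a)-r_1(s,a))\rho^{\pi_0}(s,a)\bigr|$ and (ii) an ``occupancy mismatch'' term $\bigl|\sum_{s,a} r_1(s,a)(\rho^{\pi_0}(s,a)-\rho^{\pi_1}(s,a))\bigr|$, each scaled by $1/(1-\gamma)$.

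For term (i), I would bound $|r_0(s,a)-r_1(s,a)| \le \norm{r_0-r_1}_\infty$ pointwise, pull this constant out of the sum, and use that $\rho^{\pi_0}$ is a probability distribution over state-action pairs, so $\sum_{s,a}\rho^{\pi_0}(s,a)=1$; this yields $\norm{r_0-r_1}_\infty$. For term (ii), I would recognize $\sum_{s,a} r_1(s,a)(\rho^{\pi_0}(s,a)-\rho^{\pi_1}(s,a))$ as the difference of the expectations of $r_1$ under $\rho^{\pi_0}$ and $\rho^{\pi_1}$; since by assumption $r_1 \in \mathcal{F}$, this is at most $\sup_{f\in\mathcal{F}}\bigl|\sum_{s,a} f(s,a)\rho^{\pi_0}(s,a) - \sum_{s,a} f(s,a)\rho^{\pi_1}(s,a)\bigr| = d_{\mathcal{F}}(\rho^{\pi_0}(s,a),\rho^{\pi_1}(s,a))$ by the definition of the IPM. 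Combining the two bounds and restoring the $1/(1-\gamma)$ factor gives the claim.

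There is no substantial obstacle here; the only point requiring a touch of care is the normalization $\sum_{s,a}\rho^{\pi_0}(s,a)=1$, which holds because $\mu^{\pi_0}$ is the $(1-\gamma)$-discounted mixture of the per-step state distributions and $\pi_0(\cdot\mid s)$ is a conditional distribution, so $\rho^{\pi_0}(s,a)=\mu^{\pi_0}(s)\pi_0(a\mid s)$ sums to one. Note also that, unlike in Theorem~\ref{thm:main}, no symmetrization step (a minimum over the two groups' distributions) is needed, since the reward-mismatch bound $\norm{r_0-r_1}_\infty$ is already symmetric in the choice of which occupancy measure we expand against.
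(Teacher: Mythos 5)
Your proposal is correct and follows essentially the same route as the paper's own proof: the same add-and-subtract of $r_1(s,a)\rho^{\pi_0}(s,a)$, the same $\|r_0-r_1\|_\infty$ bound using the normalization of the occupancy measure, and the same IPM step using $r_1\in\mathcal{F}$. Your closing observation that no symmetrization is needed is also accurate; the paper's ``by symmetry'' remark adds nothing to the final bound.
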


\paragraph{Remark} We see that return disparity is upper bounded by two terms: the distance between group-wise reward functions and the discrepancy between occupancy measures of the two MDPs. Given any two MDPs, the distance between group-wise reward functions is constant. If we further assume the two MDPs share the same reward function (\ie, $r_0(s, a) = r_1(s, a) = r(s, a),~\forall s\in\sspace, a\in\aaspace$), then the upper bound is simplified as
\begin{equation}
    \nonumber
    \retgap \leq d_{\mathcal{F}}\big( \rho^{\pi_0}(s, a),  \rho^{\pi_1}(s, a) \big).
\end{equation}
In this case, Theorem~\ref{thm:occ} implies a sufficient condition to minimize return disparity is to find policies $\pi_0, \pi_1 \in \Pi$ that minimize the distance between induced occupancy measures in the two MDPs. 
In what follows, we first give the detailed proof of Theorem~\ref{thm:occ} and give the algorithm design inspired by Theorem~\ref{thm:occ} in the subsequent section. 

\begin{proof}
By definition, the return disparity is
\begin{equation}
    \nonumber
    \begin{aligned}
    \retgap=&  \big|\Exp_{s\in \mu_0}[v^{\pi}(s)] - \Exp_{s\in \mu_1}[v^{\pi}(s)] \big| \\
    =& \frac{1}{1-\gamma} \bigg| \sum_{s\in \sspace}\sum_{a\in \aaspace} r_0(s, a) \rho^{\pi_0}(s, a) - \sum_{s\in \sspace}\sum_{a\in \aaspace} r_1(s, a) \rho^{\pi_1}(s, a) \bigg| \\
    =& \frac{1}{1-\gamma} \bigg| \sum_{s\in \sspace}\sum_{a\in \aaspace} r_0(s, a) \rho^{\pi_0}(s, a) - r_1(s, a) \rho^{\pi_1}(s, a) \bigg|\\
    =& \frac{1}{1-\gamma} \bigg| \sum_{s\in \sspace}\sum_{a\in \aaspace} r_0(s, a)\rho^{\pi_0}(s, a) - r_1(s, a)\rho^{\pi_0}(s, a) + r_1(s, a)\rho^{\pi_0}(s, a) - r_1(s, a) \rho^{\pi_1}(s, a) \bigg|\\
    \leq& \frac{1}{1-\gamma} \bigg| \sum_{s\in \sspace}\sum_{a\in \aaspace} r_0(s, a)\rho^{\pi_0}(s, a) - r_1(s, a)\rho^{\pi_0}(s, a)\bigg| + \frac{1}{1-\gamma} \bigg| \sum_{s\in \sspace}\sum_{a\in \aaspace} r_1(s, a)\rho^{\pi_0}(s, a) - r_1(s, a) \rho^{\pi_1}(s, a) \bigg|,\\
    \end{aligned}
\end{equation}
where the first term is upper bounded by 
\begin{equation}
    \nonumber
    \begin{aligned}
    \bigg| \sum_{s\in \sspace}\sum_{a\in \aaspace} r_0(s, a)\rho^{\pi_0}(s, a) - r_1(s, a)\rho^{\pi_0}(s, a)\bigg| =& \bigg| \sum_{s\in \sspace}\sum_{a\in \aaspace} \big(r_0(s, a)- r_1(s, a) \big) \rho^{\pi_0}(s, a)\bigg| \\
    \leq& \max_{s, a} | r_0(s, a) - r_1(s, a) | \sum_{s\in \sspace}\sum_{a\in \aaspace} \rho^{\pi_0}(s, a) = \|r_0 - r_1\|_\infty,\\
    \end{aligned}
\end{equation}
and the second term is upper bounded by 
\begin{equation}
    \nonumber
    \begin{aligned}
     \bigg| \sum_{s\in \sspace}\sum_{a\in \aaspace} r_1(s, a)\rho^{\pi_0}(s, a) - r_1(s, a) \rho^{\pi_1}(s, a) \bigg| =& \bigg| \sum_{s\in \sspace}\sum_{a\in \aaspace} r_1(s, a) \big( \rho^{\pi_0}(s, a) - \rho^{\pi_1}(s, a) \big) \bigg| \\
    \leq& \sup_{r_1\in \mathcal{F}} \bigg| \sum_{s\in \sspace}\sum_{a\in \aaspace} r_1(s, a) \big( \rho^{\pi_0}(s, a) - \rho^{\pi_1}(s, a) \big) \bigg| \\
    =& d_{\mathcal{F}}\big( \rho^{\pi_0}(s, a),  \rho^{\pi_1}(s, a) \big).\\
    \end{aligned}
\end{equation}
By symmetry, we also have
\begin{equation}
    \nonumber
    \bigg| \sum_{s\in \sspace}\sum_{a\in \aaspace} r_0(s, a)\rho^{\pi_0}(s, a) - r_1(s, a) \rho^{\pi_1}(s, a) \bigg| \leq d_{\mathcal{F}}\big( \rho^{\pi_0}(s, a),  \rho^{\pi_1}(s, a) \big).
\end{equation}
Combining the above results then completes the proof.
\end{proof}

\section{Algorithms to Mitigate Return Disparity via Occupancy Measures Alignment}
\label{app-sec:alg}

\begin{algorithm}[!ht]
 			\caption{Algorithm to mitigate return disparity via occupancy measures alignment under the framework of~\citep{dulac2015deep}.}
			\label{alg:occ}
			\begin{algorithmic}[1]
				\STATE Initialize policies $\pi_{\phi_0}$, $\pi_{\phi_1}$, Q-functions $q_{\theta_1}$ and $q_{\theta_2}$, feature extractors $f_{\psi_0}$ and $f_{\psi_1}$, environment buffers $\mathbi{D}_0$ and $\mathbi{D}_1$
				\FOR{each iteration}
			        \FOR{each environment step}
			            \FOR{$g\in\{0, 1\}$}
			            \STATE Sample an action $a_g$ using policy $\pi_{\phi_g}$; add the sample $(s_g, a_g, s^\prime_g, r_g)$ to $\mathbi{D}_g$;
			            \ENDFOR
			        \ENDFOR
				    \FOR{each model update step}
				        \FOR{$g\in\{0, 1\}$}
			            \STATE Update policy $\pi_{\phi_g}$, Q-function $q_{\theta_g}$, and feature extractor $f_{\psi_g}$ following~\citep{dulac2015deep};
			            \ENDFOR
			        \ENDFOR
			        \FOR{each occupancy measures alignment c}
			        \STATE Update feature extractors $f_{\psi_0}$ and $f_{\psi_1}$ via occupancy measures alignment via IPM (Wasserstein-1 distance/MMD);
			        \ENDFOR
				\ENDFOR
			\end{algorithmic}
\end{algorithm}

Inspired by Theorem~\ref{thm:occ}, we introduce an occupancy measures alignment procedure when learning group policies for the two MDPs. We use the Wolpertinger policy~\citep{dulac2015deep} based on Deep Deterministic Policy Gradient (DDPG)~\citep{lillicrap2015continuous} as as our baseline backbone algorithm, which is an actor-critic approach designed for learning policy in large discrete action spaces. 
We give an outline of our algorithm in Algorithm~\ref{alg:occ} and readers can refer to \citep{lillicrap2015continuous} for more clarification of the Wolpertinger policy. The details of the occupancy measures alignment step are similar to those of state visitation distributional alignment step. The experimental results of the implementation of Algorithm~\ref{alg:occ} are shown in Figure~\ref{fig:ddpg-results}.

The overall results in Figure~\ref{fig:ddpg-results} are similar to those in Figure~\ref{fig:main-results}.
However, compared to the Algorithm~\ref{alg}, the training processes of Algorithm~\ref{alg:occ} are more unstable. Besides, it also incurs additional learning costs since we have to train one policy for each MDP instead of one policy for both MDPs.

\begin{figure*}[!htb]
\centering
\begin{subfigure}[b]{.235\linewidth}
  \centering
  \includegraphics[width=1.1\linewidth]{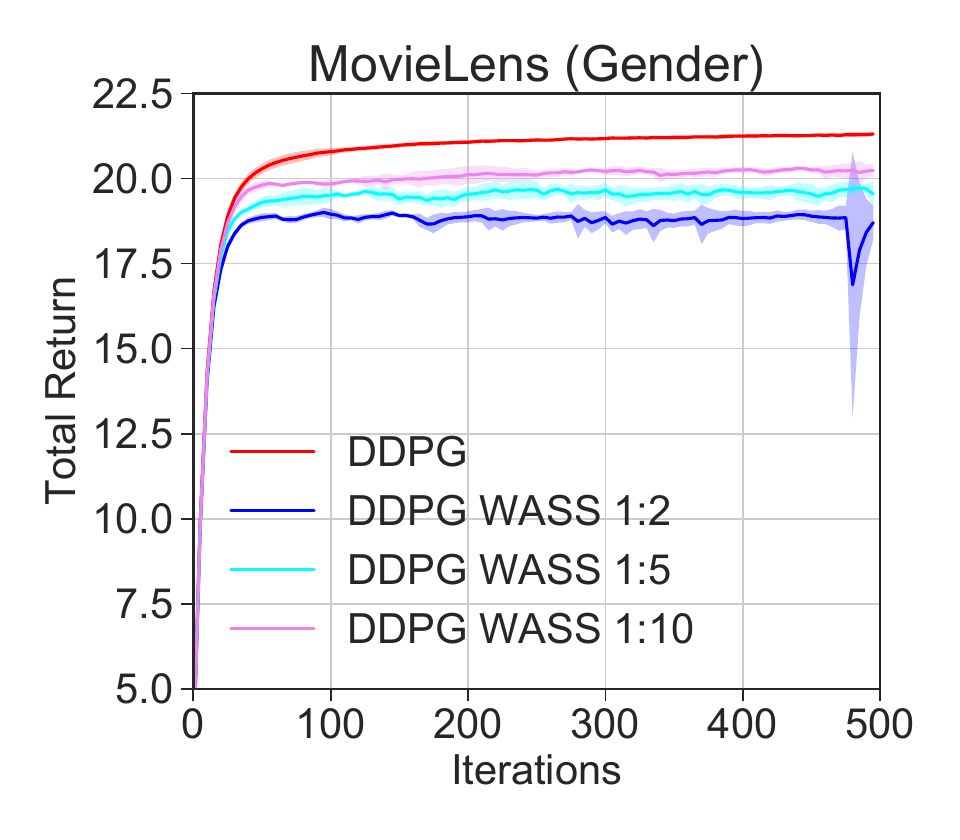}
\end{subfigure}
~
\begin{subfigure}[b]{.235\linewidth}
  \centering
\includegraphics[width=1.1\linewidth]{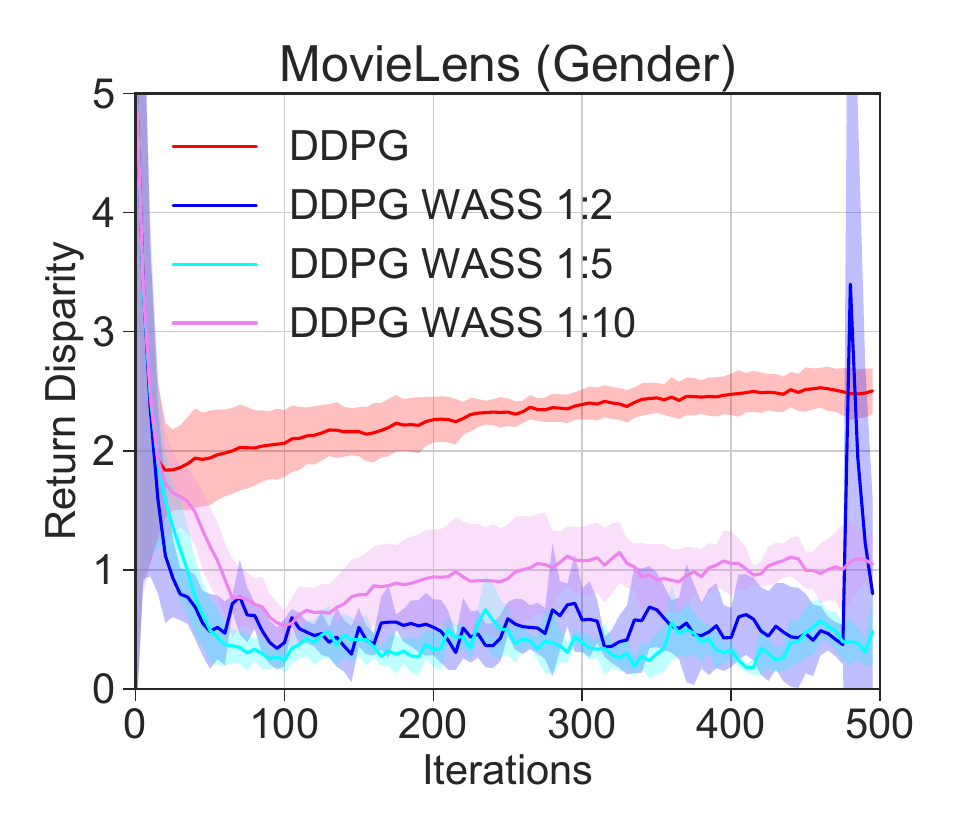}
\end{subfigure}
~
\begin{subfigure}[b]{.235\linewidth}
  \centering
  \includegraphics[width=1.1\linewidth]{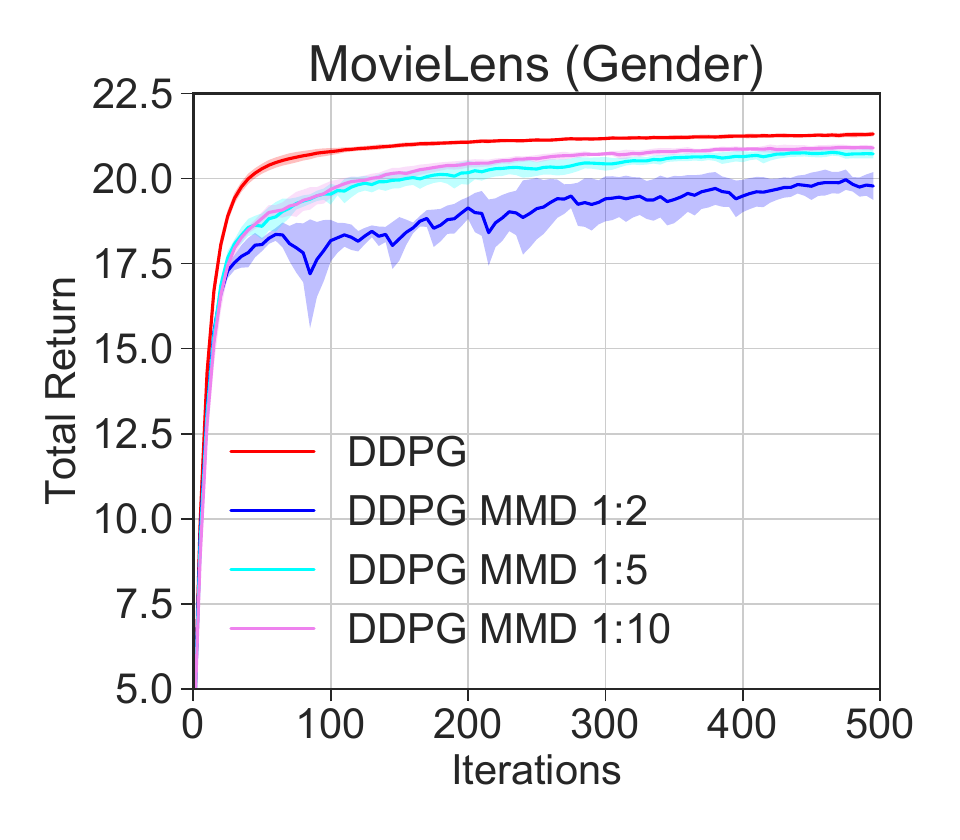}
\end{subfigure}
~
\begin{subfigure}[b]{.235\linewidth}
  \centering
  \includegraphics[width=1.1\linewidth]{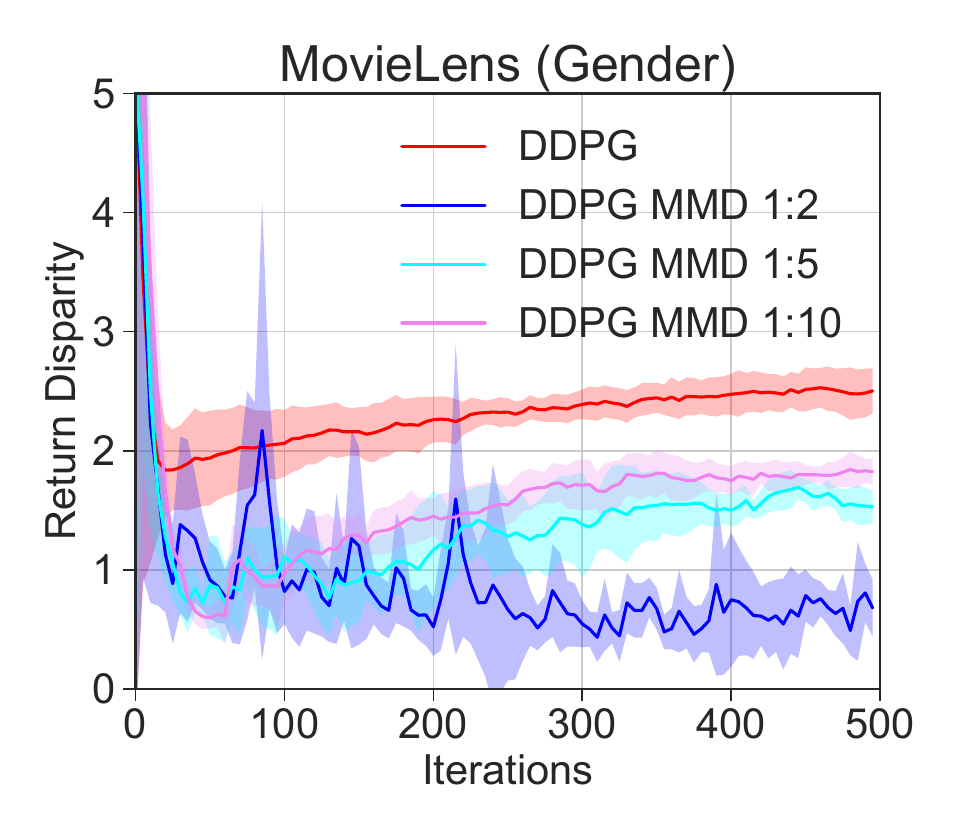}
\end{subfigure}
~
\begin{subfigure}[b]{.235\linewidth}
  \centering
  \includegraphics[width=1.1\linewidth]{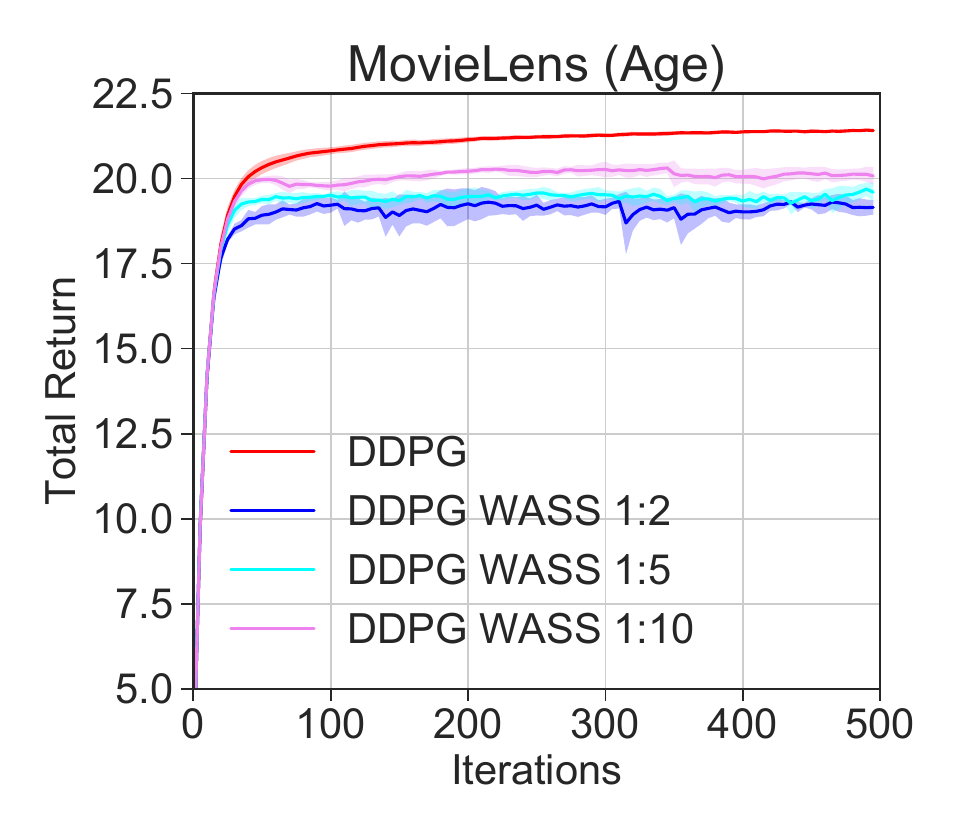}
\end{subfigure}
~
\begin{subfigure}[b]{.235\linewidth}
  \centering
\includegraphics[width=1.1\linewidth]{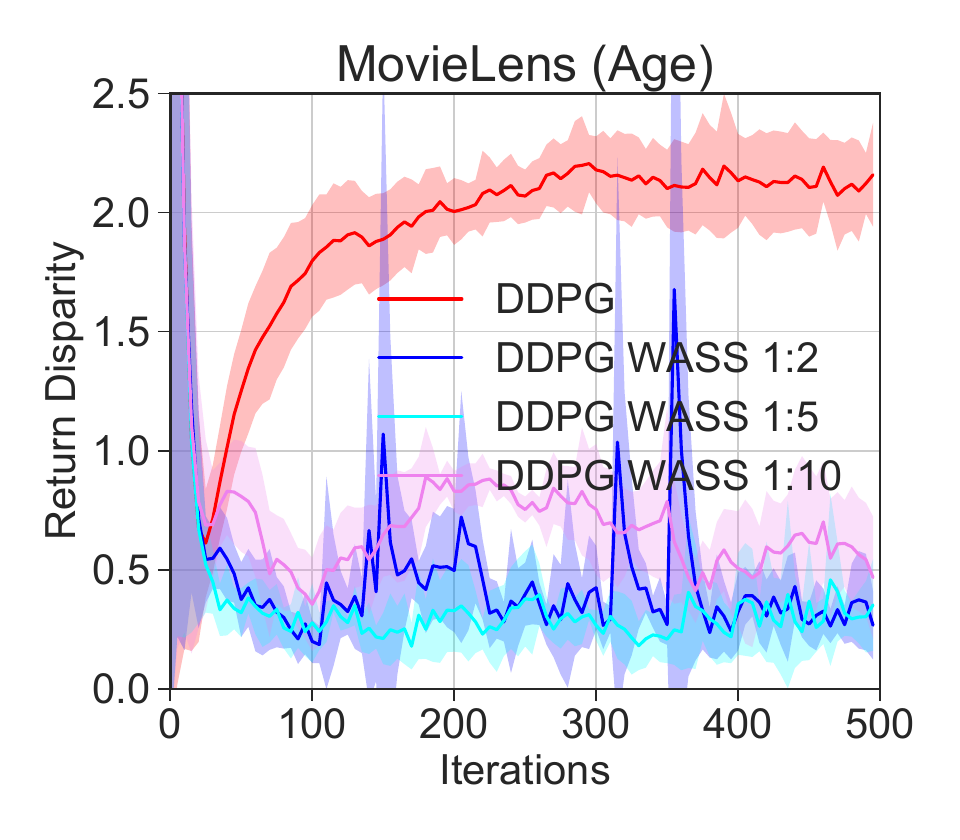}
\end{subfigure}
~
\begin{subfigure}[b]{.235\linewidth}
  \centering
  \includegraphics[width=1.1\linewidth]{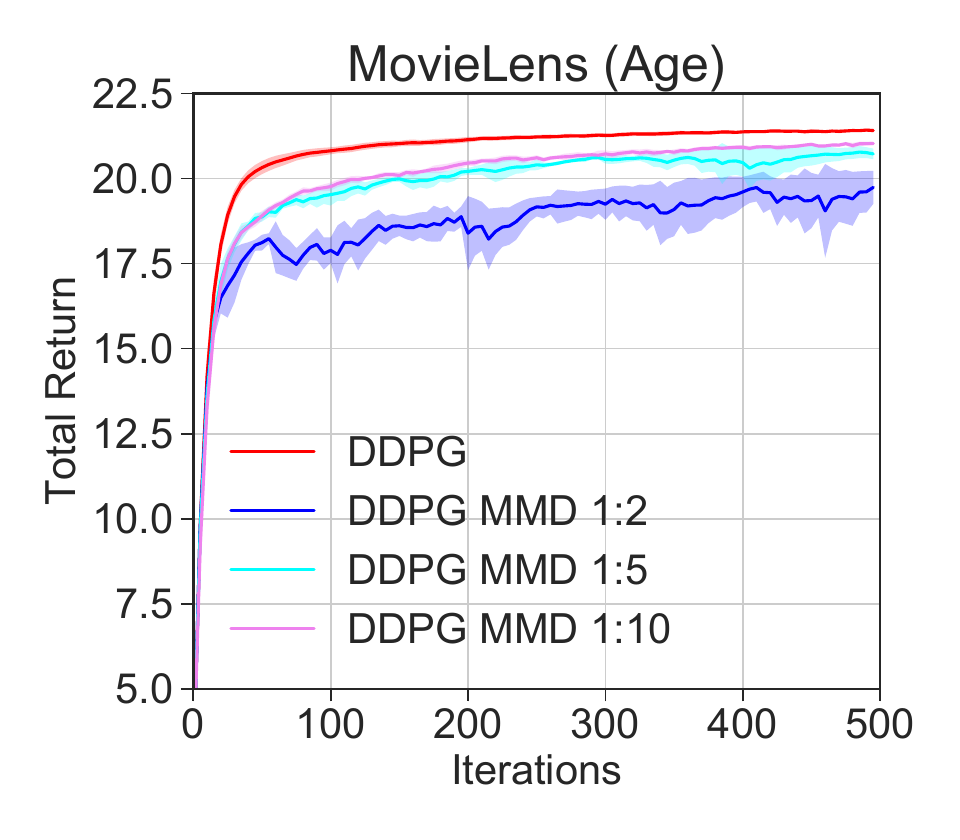}
\end{subfigure}
~
\begin{subfigure}[b]{.235\linewidth}
  \centering
  \includegraphics[width=1.1\linewidth]{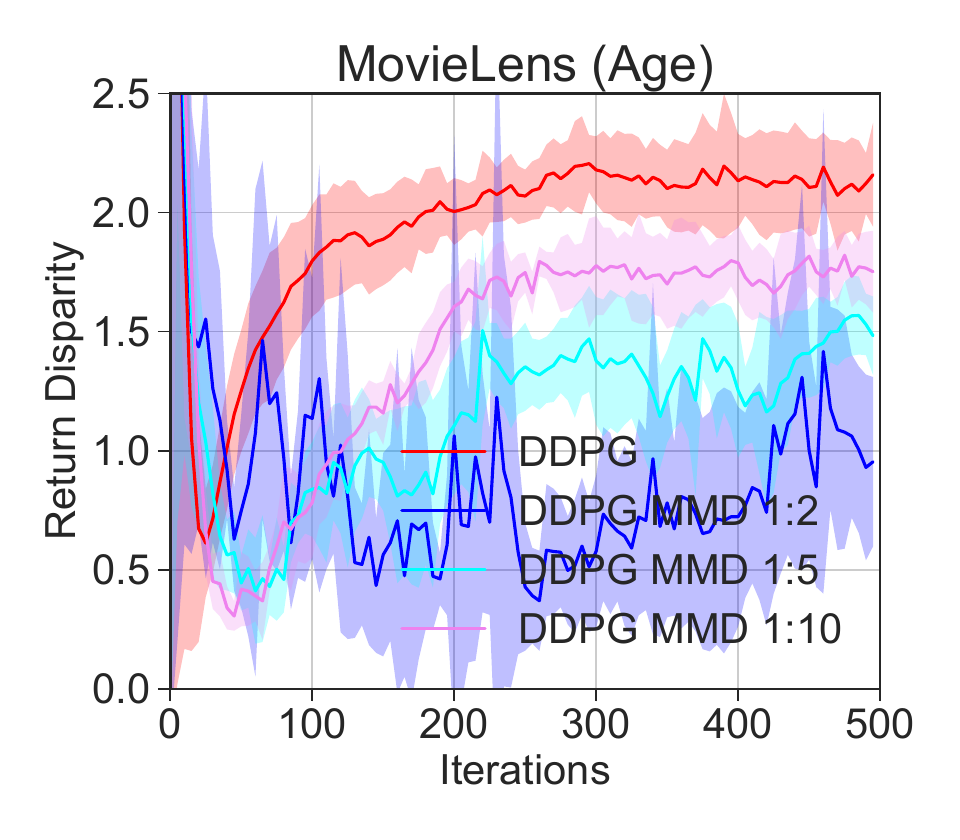}
\end{subfigure}
~
\begin{subfigure}[b]{.235\linewidth}
  \centering
  \includegraphics[width=1.1\linewidth]{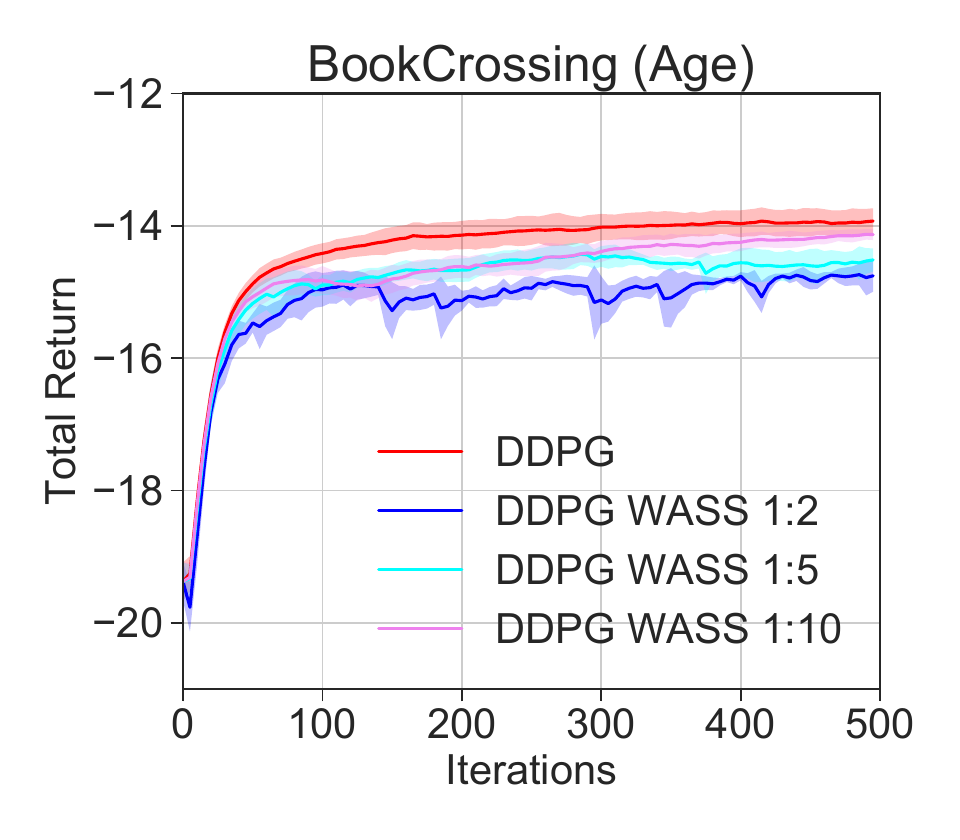}
\end{subfigure}
~
\begin{subfigure}[b]{.235\linewidth}
  \centering
\includegraphics[width=1.1\linewidth]{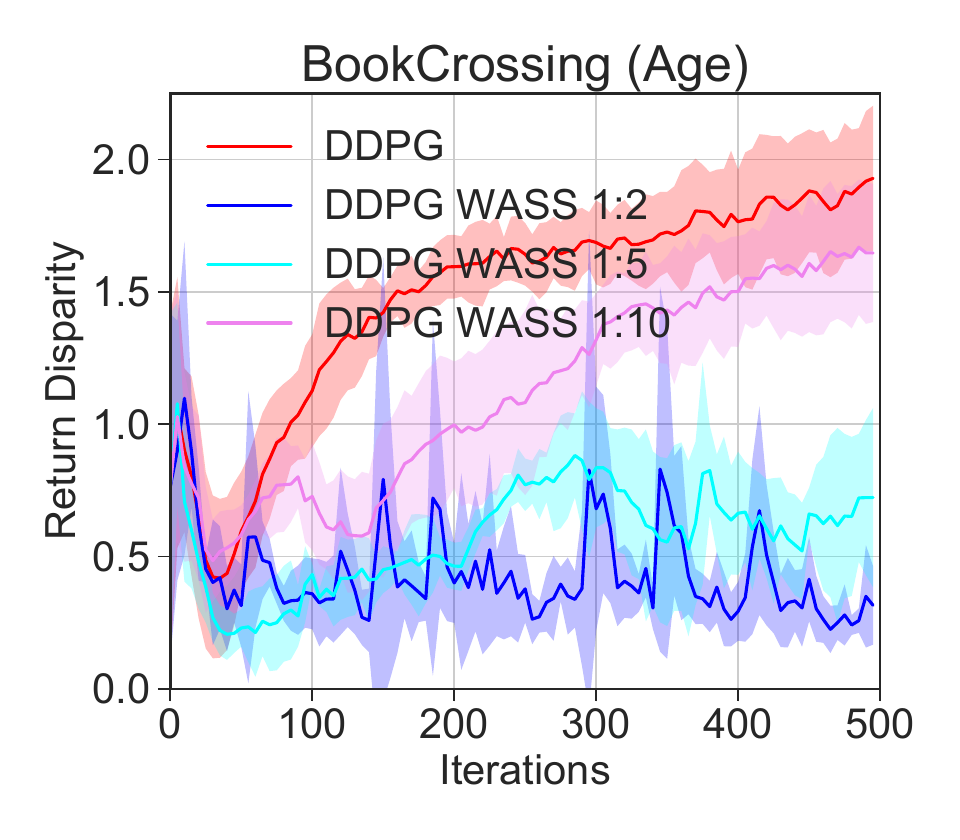}
\end{subfigure}
~
\begin{subfigure}[b]{.235\linewidth}
  \centering
  \includegraphics[width=1.1\linewidth]{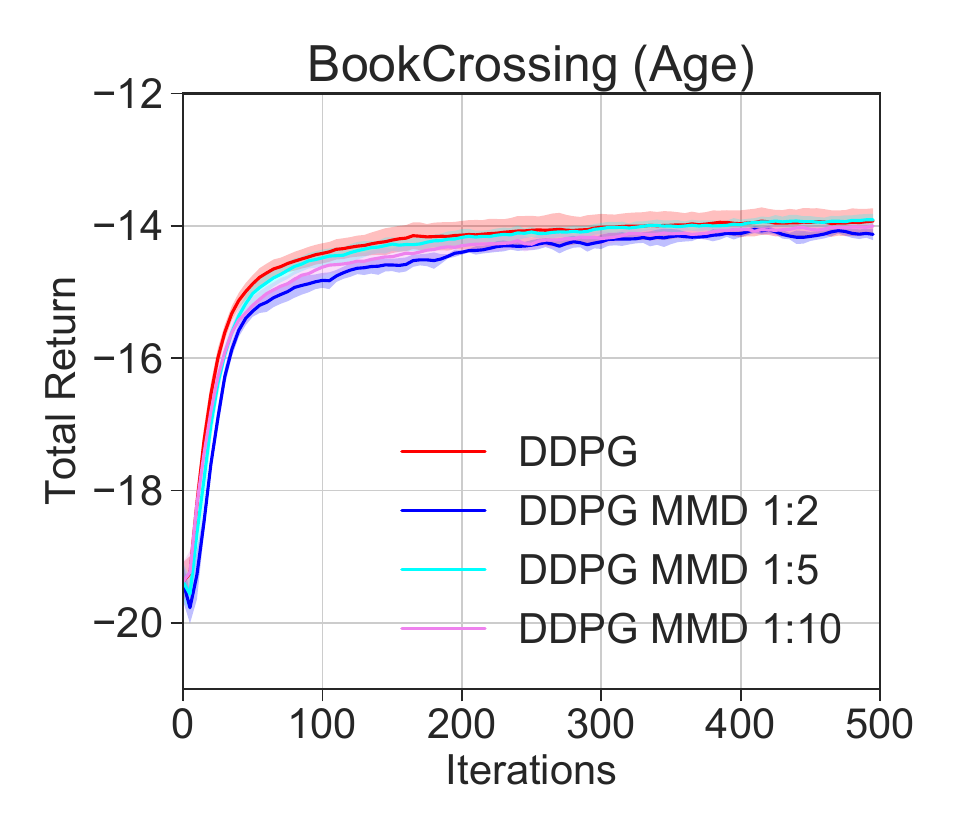}
\end{subfigure}
~
\begin{subfigure}[b]{.235\linewidth}
  \centering
  \includegraphics[width=1.1\linewidth]{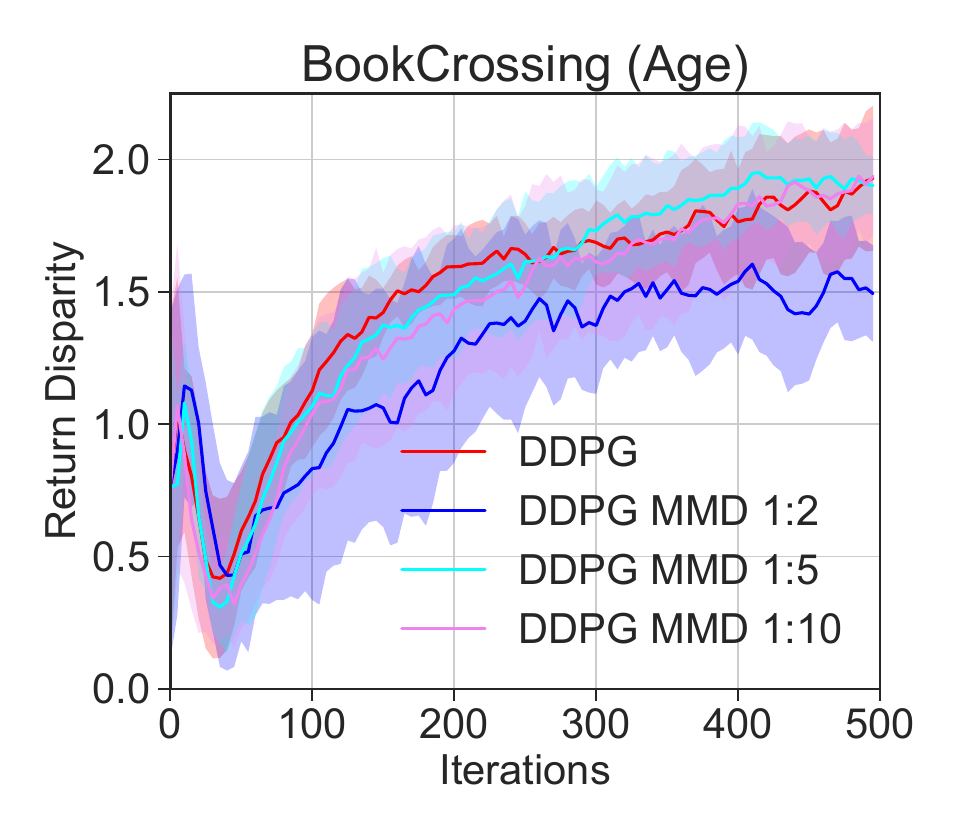}
\end{subfigure}
~
    \caption{Learning curves of DDPG, \textsc{DDPG-WASS} and \textsc{DDPG-MMD} in three different settings.}
    \label{fig:ddpg-results}
\end{figure*}

\end{document}